\documentclass{article}

\usepackage{microtype}
\usepackage{graphicx}
\usepackage{subcaption}
\usepackage{booktabs} 

\usepackage{hyperref}

\usepackage[accepted]{icml2026}

\usepackage{amsmath}
\usepackage{amssymb}
\usepackage{mathtools}
\usepackage{amsthm} 

\usepackage{multirow} 

\newcommand{\RETURN}{\textbf{Return }} 
\usepackage{tikz}
\usetikzlibrary{positioning, arrows.meta, shapes.geometric, calc}
\usepackage{caption}
\usepackage{arydshln} 
 
\usepackage[capitalize,noabbrev]{cleveref}

\theoremstyle{plain}
\newtheorem{theorem}{Theorem}[section]
\newtheorem{proposition}[theorem]{Proposition}

\theoremstyle{definition}
\newtheorem{definition}[theorem]{Definition}

\theoremstyle{remark}

\icmltitlerunning{DiFA: Inference-Time Forward-Process Alignment for Diffusion Models}

\begin{document}

\twocolumn[
\icmltitle{ 
DiFA: Inference-Time Forward-Process Alignment for Diffusion Models 
}



\icmlsetsymbol{equal}{*}

  \begin{icmlauthorlist}
    \icmlauthor{Shigui Li}{scutmath}
    \icmlauthor{Delu Zeng}{scutece}
  \end{icmlauthorlist}

  \icmlaffiliation{scutmath}{School of Mathematics, South China University of Technology, Guangzhou, China;}
  \icmlaffiliation{scutece}{School of Electronic and Information Engineering, South China University of Technology, Guangzhou, China}

  \icmlcorrespondingauthor{Delu Zeng}{dlzeng@scut.edu.cn}

  \icmlkeywords{Diffusion Models, Flow Matching, Training-Free Inference, Forward Alignment, Kalman Filtering}

  \vskip 0.3in
]



\printAffiliationsAndNotice{}  

\begin{abstract} 
The prevailing inference framework for diffusion models formulates generation fundamentally as a problem of numerical integration. This perspective casts the model as an exact estimator, neglecting the inherent statistical uncertainty of the denoising process. In this work, we propose Forward-Process Aligned Diffusion prediction (\textbf{DiFA}), a training-free  framework that reframes inference-time data prediction refinement  as a sequential state estimation problem. Rather than reusing past outputs solely for numerical integration, DiFA treats  iterative data predictions along the reverse trajectory as correlated observations to build a forward-aligned temporal consensus. Inspired by Kalman filtering,   this consensus aggregates historical predictions according to structural consistency and noise-level compatibility.  To counteract the over-smoothing tendency of temporal consensus, we introduce a deviation guidance mechanism to adaptively preserve residual details. Empirically, DiFA yields significant improvements on CIFAR-10 and ImageNet across the evaluated metrics, including FID, IS, and FD-DINOv2, demonstrating that aligning inference with the forward statistical structure substantially improves generative fidelity.  
\end{abstract}

\section{Introduction}
\label{sec:intro}

Diffusion models (DMs) have demonstrated striking success in high-fidelity generative tasks \citep{sohl2015deep, ho2020denoising,song2021score,dhariwal2021diffusion}, spanning from photorealistic image synthesis to complex scientific simulations \cite{rombach2022high, blattmann2023align,podell2024sdxl, batifol2025flux}. They leverage an iterative denoising process to progressively transform unstructured noise into samples from the target data distribution \citep{kingma2021variational,karras2022elucidating,karras2024analyzing}.   
However,   
this iterative process inherently imposes a significant inference burden \cite{lu2022dpm,zhao2023unipc,wimbauer2024cache, luo2024one}. Unlike one-step generative paradigms (e.g., GANs \citep{NIPS2014_5ca3e9b1}, VAEs \citep{kingma2013auto}), DMs rely on a substantial number of network evaluations (NFEs) to progressively traverse the data manifold and generate
high-fidelity samples \cite{ho2020denoising, farghly2025diffusion}. 
 
The challenge in this diffusion paradigm lies in reducing sampling steps without compromising sample quality \cite{liu2022pseudo,bao2022analyticdpm,zhang2023fast,zheng2023dpm,li2023scire,ma2024sit,sabour2024align,lu2025dpm,jiang2025sada,li2025evodiff,fu2025selfverification, ma2025inference}. Standard samplers, whether deterministic (ODE-based) or stochastic (SDE-based), typically rely on local linear approximations of the score function to predict the next state \cite{lu2022dpm,gonzalez2023seeds,ye2024tfg,abuduweili2025enhancing,tang2025inferencetime}.  
We argue that this linearized perspective is inherently geometrically agnostic.   
In high-curvature regions, such approximations introduce inevitable estimation biases \cite{karras2022elucidating,esser2024scaling,zhang2025antiexposure}. Furthermore, although injected noise is necessary for diversity, it introduces a per-step variance that is severely amplified during discrete, few-step inference \cite{song2021denoising}. 
These errors are not isolated but cumulative \cite{ning2024elucidating,chang2026design,li2026mitigatingcontractivitytrapdiffusion},  
with minor deviations compounding into irreversible prediction drift.  

\begin{figure*}[t]
    \centering 
    \resizebox{\textwidth}{!}{
       \begin{tikzpicture}[node distance=0.8cm and 1.2cm, auto, >=stealth]

            \tikzset{
                block/.style={draw, rectangle, minimum height=3.5em, minimum width=6.5em, align=center, thick, font=\small},
                base_block/.style={block, fill=gray!5, draw=gray!60},
                difa_block/.style={block, fill=blue!8, draw=blue!70!black},
                state_node/.style={draw, ellipse, minimum height=2.6em, minimum width=6.0em, align=center, fill=orange!10, thick, draw=orange!80!black, font=\small},
                fusion/.style={
                    inner sep=0pt, 
                    minimum size=2.6em
                },
                line/.style={draw, -{Latex[length=1.8mm]}, thick},
                main_line/.style={draw, -{Latex[length=2.2mm]}, line width=1.1pt},
                dash_line/.style={draw, -{Latex[length=1.4mm]}, dashed, color=gray!60}
            }

            \node (xt_top) at (1.5,0) {$\boldsymbol{x}_t$};
            \node [base_block, right=of xt_top] (model_top) {Standard Denoiser \\ \scriptsize $D_\theta(\boldsymbol{x}_t, t) \to \hat{\boldsymbol{x}}_0^{(t)}$};
            \node [base_block, right=2.4cm of model_top] (solver_top) {Numerical Solver \\ \scriptsize (Open-Loop)};
            \node [right=of solver_top] (xt_next_top) {$\boldsymbol{x}_{t-1}$};
            
            \path [main_line] (xt_top) -- (model_top);
            \path [main_line] (model_top) -- (solver_top);
            \path [main_line] (solver_top) -- (xt_next_top);
            
            \node [above=0.75cm of model_top.east, text=red!80!black, font=\bfseries\footnotesize] {Statistically Agnostic (Traditional View)};

            \draw [thick, dashed, color=gray!15] (0, -0.9) -- (15, -0.9);

            \node (xt_bot) at (0, -4.2) {$\boldsymbol{x}_t$};
            \node [difa_block, right=0.8cm of xt_bot] (curr_pred) {Current $\hat{\boldsymbol{x}}_0^{(t)}$\\ Prediction };

            \node [fusion, right=1.8cm of curr_pred] (fusion_node) {
                \begin{tikzpicture}[scale=0.42]
 
                    \draw[thick] (-0.1, 1) .. controls (-1, 1) and (-1.3, 0.7) .. (-1.3, 0.2)
                                        .. controls (-1.3, -0.3) and (-0.8, -0.4) .. (-1, -0.6)
                                        .. controls (-1.2, -0.8) and (-0.8, -1.2) .. (-0.1, -1.2) -- cycle;
 
                    \draw[thick] (0.1, 1) .. controls (1, 1) and (1.3, 0.7) .. (1.3, 0.2)
                                        .. controls (1.3, -0.3) and (0.8, -0.4) .. (1, -0.6)
                                        .. controls (1.2, -0.8) and (0.8, -1.2) .. (0.1, -1.2) -- cycle;
 
                    \fill (-0.7, 0.5) circle (0.15); \draw (-0.7, 0.5) -- (-0.4, 0.5) -- (-0.4, 0.2) -- (-0.1, 0.2);
                    \fill (-0.9, -0.1) circle (0.15); \draw (-0.9, -0.1) -- (-0.3, -0.1);
                    \fill (0.7, 0.4) circle (0.15); \draw (0.7, 0.4) -- (0.3, 0.4) -- (0.3, -0.2) -- (0.1, -0.2);
                \end{tikzpicture}
            };
            
            \node [difa_block, right=1.8cm of fusion_node] (boosting) {DiFA-refined \\ Prediction };
            \node [base_block, right=1.0cm of boosting] (solver_bot) {Downstream  \\ Solver};
            \node [right=0.6cm of solver_bot] (xt_next_bot) {$\boldsymbol{x}_{t-1}$};

            \node [state_node] (kalman_state) at (5.2, -2.4) {Temporal Consensus \\ \scriptsize $\hat{\boldsymbol{x}}_0^{\text{cons}}$};
            
            \draw [line, orange!80!black, looseness=4] (kalman_state.110) to [out=140, in=40] (kalman_state.70);
            \node [above=0.25cm of kalman_state, font=\tiny, text=orange!80!black] {Sequential Attention};
 
            \path [main_line] (xt_bot) -- (curr_pred);
            \path [line] (curr_pred) -- (fusion_node) node [pos=0.45, above, font=\tiny] {observation};
            \path [line] (boosting) -- (solver_bot);
            \path [main_line] (solver_bot) -- (xt_next_bot);
            
            \path [dash_line] (curr_pred.north) |- (kalman_state.west) 
                node [pos=0.7, above, font=\tiny, text=gray] {update};
            
            \path [line] (kalman_state.south) |- (fusion_node.north) 
                node [pos=0.2, right, font=\tiny] {consensus $\hat{\boldsymbol{x}}_0^{\text{cons}}$};
             
            \path [line, purple!70] (fusion_node) -- (boosting) 
                node [midway, above, font=\tiny] {deviation $\boldsymbol{r}_t$}; 
            

            \node [font=\tiny, above=0.1cm of boosting, text=blue!70!black, xshift=0.1cm] (formula) {$\hat{\boldsymbol{x}}_0^{\text{DiFA}} = \hat{\boldsymbol{x}}_0^{(t)} +  \omega  \boldsymbol{g}_t$};

            \node [above=1.25cm of solver_bot, text=blue!80!black, xshift=-1.75cm, font=\bfseries\footnotesize] {Forward-Aligned Prediction (DiFA View)};

        \end{tikzpicture}
    }
    
    \vspace{0.2cm}
    \caption{  
 \emph{DiFA Mechanism of Inference-Time Forward-Process Alignment.} 
Traditional samplers (Top) use each denoiser prediction only for the current solver step, discarding temporal redundancy along the trajectory. DiFA (Bottom) constructs a forward-aligned temporal consensus $\hat{\boldsymbol{x}}_0^{\mathrm{cons}}(t)$ from historical  predictions and uses it as an anchor for residual construction. Specifically, $\boldsymbol{r}_t=\hat{\boldsymbol{x}}_0^{(t)}-\hat{\boldsymbol{x}}_0^{\mathrm{cons}}(t)$ is transformed into deviation guidance $\boldsymbol{g}_t=\operatorname{G}(\boldsymbol{r}_t)$, yielding the refined prediction $\hat{\boldsymbol{x}}_0^{\mathrm{DiFA}}(t)=\hat{\boldsymbol{x}}_0^{(t)}+\omega\boldsymbol{g}_t$ without additional NFEs.}
    \label{fig:mechanism}
\end{figure*}
 
To address these challenges, research efforts have largely consolidated into two orthogonal directions, each with distinct limitations. On one hand, distillation techniques attempt to compress the trajectory into fewer steps
\cite{salimans2022progressive,song2023consistency,meng2023distillation,liu2023flow,frans2025one}. While reducing inference latency, they incur substantial retraining overhead and are often confined to task-specific customizations
\cite{luo2023latent,liu2024instaflow,yin2024one,sauer2024adversarial,dong2024continually,shen2025information,sabour2025align}.
This may restrict the model to a narrowed distribution and compromise the zero-shot generalization and diversity of the original teacher model. On the other hand, advanced solvers employ higher-order numerical methods to reduce truncation errors \cite{lu2022dpm}. However, by treating the fixed model as an exact estimator for correcting temporal discretization errors,
they follow each instantaneous prediction as if it were a sufficiently accurate clean-signal estimate, leaving its uncertainty and the temporal redundancy of the prediction sequence unexploited.

Inspired by EDM2 \cite{karras2024guiding}, which suggests DMs can self-correct using just \emph{bad versions} of themselves, we identify the bottleneck not merely time discretization error, but the intrinsic estimation uncertainty of the learned model. Unlike high-order ODE solvers that assume the vector field is ground truth and use auxiliary points solely to refine temporal integration, we argue that the learned score is an imperfect, smoothed approximation of the posterior mean due to the MSE training objective. Consequently, standard samplers treat each prediction as a transient estimation, blindly following a biased tangent while discarding the rich information embedded in the sampling trajectory. \emph{Can we then harvest the latent consensus from the sequence of historical predictions to rectify this estimation drift without any additional network evaluations?} By synthesizing information across the temporal sequence, the model can leverage its own historical redundancy to mitigate linearization biases. The base DM thus holds untapped potential; it merely lacks a mechanism to unleash this intrinsic self-guidance through temporal consensus.  
Guided by this insight, we propose Forward-Process Aligned Diffusion prediction (DiFA),    
a principled inference-time algorithm that rectifies  signal predictions via forward-process alignment. 
DiFA does not replace the numerical solver. Instead, it refines the clean prediction supplied to each solver step. Motivated by  Kalman estimation under an idealized static-anchor model, DiFA implements this principle through forward-aligned sliding-window temporal consensus, where historical predictions are fused according to structural consistency and noise-level compatibility. To avoid over-smoothing, DiFA further constructs a  deviation  guidance direction and modulates its smoothed and high-frequency components.

In summary, our contributions are as follows. First, we identify clean-signal prediction drift during diffusion inference as arising from time-dependent estimation uncertainty, posterior ambiguity, and trajectory mismatch, shifting the focus from solely improving numerical integration to refining the clean-signal estimate provided to downstream solvers. 
Second, we propose DiFA, a solver-compatible inference-time output-alignment framework that exploits temporal redundancy through adaptive temporal consensus, where historical predictions are fused according to structural consistency and noise-level-dependent reliability. 
Third, motivated by Best Linear Unbiased Estimation (BLUE), we introduce a principled approximation to effectively mitigate the aforementioned prediction uncertainty. Extensive evaluations on CIFAR-10 and ImageNet show that DiFA consistently improves existing samplers across FID, IS, and FD-DINOv2 scores without additional network evaluations, demonstrating the effectiveness of correcting clean-signal predictions within the inference pipeline.

\section{Related Work} \label{sec:related}

\paragraph{Accelerated Generation.} 
The substantial computational burden of DMs has catalyzed a broad spectrum of acceleration strategies across distinct operational paradigms. At the architecture level, latent DMs \cite{rombach2022high} improve efficiency by operating in compressed latent spaces. At the training level, researchers accelerate generation by retraining or distilling models to shorten sampling chains. Representative modeling approaches include progressive distillation \cite{salimans2022progressive}, flow matching (FM) \cite{lipman2023flow}, reflow \cite{liu2023flow}, and recently developed one-step or few-step methods, such as consistency models \cite{song2023consistency}, adversarial distillation \cite{sauer2024adversarial}, shortcut models \cite{frans2025one}, and mean flows \cite{geng2025mean}. Alternatively, training-free methods optimize the inference process itself. Following the deterministic formulation \cite{song2021denoising}, a variety of advanced solvers have emerged. Early solvers like PNDM \cite{liu2022pseudo} introduced the use of historical gradients, while the DPM-Solver family \cite{lu2022dpm,lu2025dpm} later proposed specialized exponential integrators for diffusion ODEs. Specialized solvers with lightweight learning have been explored to bridge the gap between both paradigms \cite{tong2025learning}. Moreover, complementary strategies include schedule optimization \cite{sabour2024align}, parallel decoding \cite{shih2023parallel} and caching mechanisms \cite{wimbauer2024cache,ma2024deepcache}. 
While effective in reducing NFEs, these methods prioritize temporal precision, often neglecting the intrinsic estimation variance.

\paragraph{Inference-Time Rectification.}   
A pivotal challenge in diffusion model inference lies in mitigating trajectory instability and statistical discrepancies \citep{lin2024common}. Accordingly, refinement methodologies have progressed from variance-controlled stochastic sampling \cite{ho2020denoising, song2021score, bao2022analyticdpm} to explicit rectification. While advanced solvers like UniPC \cite{zhao2023unipc} emphasize numerical accuracy for efficient integration, recent approaches like Zigzag \cite{bai2025zigzag} and inference-time scaling strategies \cite{ma2025inference} explicitly amend intermediate errors or optimize search paths, often trading computational efficiency for improved generation quality. Recently, to overcome the rigidity of conventional numerical solvers, EVODiff \citep{li2025evodiff} rectifies the inference path via entropy-aware variance optimization.   
Although classical antithetic sampling \cite{ren2019adaptive} has been revisited recently for initial-noise pairing \cite{jia2025antithetic} to improve global diversity, its capacity for dynamic trajectory rectification remains largely unexplored. Existing strategies typically rely on external task-specific constraints \citep{chung2022improving, bansal2024universal, yang2024guidance}, heuristic interventions on internal structures \citep{ho2021classifier, hong2023improving, epstein2023diffusion, liu2023more, Shen_2024_CVPR, ahn2024self}, or temporal rescaling \citep{park2025temporal}. 
However, these methods mainly improve semantic consistency while leaving the inherent estimation variance and high-frequency degradation caused by MSE-based denoising unaddressed. Although EDM2 \cite{karras2024guiding} introduces self-guidance, it still relies on auxiliary degraded models and hand-crafted guidance cues. In contrast, DiFA offers an intrinsic and training-free framework grounded in classical estimation principles. It treats denoising as a filtering process, avoiding reliance on external models or heuristic guidance.

\paragraph{Limitations and Our Contribution.}
Existing inference strategies face a fundamental dilemma: distillation incurs costly retraining and potential quality degradation, whereas standard solvers neglect the estimation variance and detail attenuation of MSE-trained denoisers. We propose Forward-Process Aligned Diffusion prediction (\emph{DiFA}), a training-free framework that exploits temporal redundancy to refine clean-signal predictions. Unlike solvers that use historical model evaluations to refine numerical integration, DiFA constructs a forward-aligned reference in the clean-prediction space before the unchanged solver update, without requiring auxiliary models. Under an idealized independent-view model, the BLUE analysis yields a principled fusion rule for historical predictions, which motivates the practical forward-aligned consensus. Deviation guidance is designed to counteract the detail attenuation introduced by temporal aggregation. By aligning reverse inference with the forward statistical structure, DiFA mitigates prediction drift and better exploits the capacity of pretrained generative models.

\section{Preliminaries}
\label{sec:preliminaries}
\subsection{Diffusion Models and Probability Flow}
Diffusion models (DMs) rely on a constructed  forward process that progressively corrupts data $\boldsymbol{x}_0 \sim q(\boldsymbol{x}_0)$ into Gaussian noise \citep{ho2020denoising}.
The marginal distribution of a noisy sample $\boldsymbol{x}_t$ at time $t$ can generally be expressed as: 
\begin{equation}\label{transitionkernel}
q(\boldsymbol{x}_t|\boldsymbol{x}_0) = \mathcal{N}(\boldsymbol{x}_t; \alpha_t \boldsymbol{x}_0, \sigma_t^2\boldsymbol{I}),
\end{equation}
where $\alpha_t$ and $\sigma_t$ denote the signal and noise scale parameters, respectively. The model learns to reverse this process by predicting the noise component $\boldsymbol{\epsilon}$ via a network $\boldsymbol{\epsilon}_\theta(\boldsymbol{x}_t, t)$, optimized by minimizing the mean squared error (MSE):
\begin{equation}\label{trainingloss}
\operatorname{L} = \mathbb{E}_{t,\boldsymbol{x}_0,\boldsymbol{\epsilon}}\left\|\boldsymbol{\epsilon} - \boldsymbol{\epsilon}_{\boldsymbol {\theta}}(\alpha_t \boldsymbol{x}_0 + \sigma_t\boldsymbol{\epsilon}, t)\right \|^2.
\end{equation} 
While classic DDPMs operate on a discrete schedule, the transition kernel can be generalized to a continuous-time setting \citep{kingma2021variational}. This extension unifies the framework under a stochastic differential equation (SDE) formulation \citep{song2021score}, where the generative process follows the reverse-time dynamics:
\begin{equation}\label{songsde}
    \operatorname{d} \boldsymbol{x} = [f(t)\boldsymbol{x} - g(t)^2 \nabla_{\boldsymbol{x}} \log p_t(\boldsymbol{x})]\operatorname{d} t + g(t)\operatorname{d} \boldsymbol{\bar{w}}, 
\end{equation}
where $\operatorname{d}\boldsymbol{\bar{w}}$ is the standard Wiener process in reverse time. 
Crucially, there exists a deterministic Probability Flow ODE (PF-ODE) that shares the same marginal as the SDE:
\begin{equation}\label{songode}
\frac{\operatorname{d} \boldsymbol{x}}{\operatorname{d} t} = f(t)\boldsymbol{x} - \frac{1}{2} g(t)^2 \nabla_{\boldsymbol{x}} \log p_t(\boldsymbol{x}) .
\end{equation}

\begin{figure}[t]
    \centering  
    \includegraphics[width=0.8\linewidth]{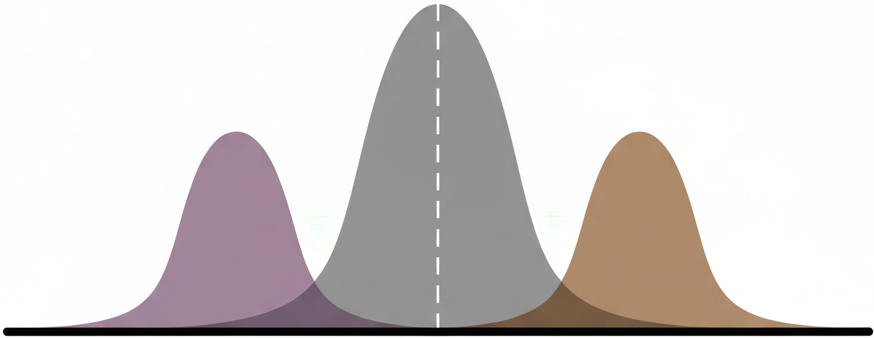}  
    \caption{ 
    Illustrating the mean-field discrepancy in model estimation through spatial shift and magnitude scaling. 
    }
    \label{noiseshift_estimation}
\end{figure}
\subsection{Inference Framework and Linearization Bias}
As the score function $\nabla_{\boldsymbol{x}} \log p_t(\boldsymbol{x})$ is parameterized as $
-\boldsymbol{\epsilon}_\theta(\boldsymbol{x}_t, t) / \sigma_t$ \citep{song2021score,karras2022elucidating} and the prediction relationship of $    \boldsymbol{x}_ {\boldsymbol {\theta}}\left(\boldsymbol{x}_t, t\right)=\frac{\boldsymbol{x}_t-\sigma_t\boldsymbol{\epsilon}_ {\boldsymbol {\theta}}\left(\boldsymbol{x}_t, t\right)}{\alpha_t}$ \cite{kingma2021variational}, the inference ODE is formulated as  
\begin{equation}\label{doded}
\frac{\operatorname{d} \boldsymbol{x} }{\operatorname{~d} t}=\left(f(t)+\frac{g^2(t)}{2 \sigma_t^2} \right) \boldsymbol{x}_t - \alpha_t\frac{g^2(t)}{2 \sigma_t^2} \boldsymbol{x}_ {\boldsymbol {\theta}}\left(\boldsymbol{x}_t, t\right).
\end{equation} 
This framework is generalized by the EDM formulation \citep{karras2022elucidating,karras2024guiding}, which stabilizes probability flow via signal scaling. A pivotal innovation in EDM is the introduction of input preconditioning, which explicitly standardizes the network input into a \emph{signal-centric form}:
\begin{equation}
\label{eq:edm_scaling}
\frac{\boldsymbol{x}_t}{s(t)} = \boldsymbol{x}_0 + \sigma(t)\boldsymbol{\epsilon}.
\end{equation}
By normalizing the signal scale to unity across all time $t$, this formulation reveals that the diffusion trajectory effectively orbits a static anchor $\boldsymbol{x}_0$. This architectural alignment of the input state to $\boldsymbol{x}_0$ resonates with the core intuition of our work, where we seek to align the output predictions to the same statistical invariant. This concept of straightening the generative path is also central to concurrent works on  FMs \cite{lipman2023flow, liu2023flow}. In fact, FM can be seen as a special case of diffusion modeling  \cite{kingma2023understanding}, which typically defines the process as a linear interpolation $\boldsymbol{x}_{t}=(1-t)\boldsymbol{x}_{0}+t\boldsymbol{\epsilon}$.  
While trajectory straightness is widely regarded as a key property for minimizing discretization errors, the perspective of rectified diffusion \cite{wang2025rectified} suggests that pre-defined linearity is insufficient. Instead, the efficacy of rectification stems from  bootstrap retraining, which aligns the model with ODE coupling of a pre-trained teacher. This process facilitates the transport curvature, allowing first-order solvers to achieve one-step generation \cite{frans2025one,geng2025mean}.

In this work, we focus on the general EDM formulation without such expensive retraining. Consequently, our standard pre-trained models do not strictly satisfy the first-order property. To quantify the impact of this, let $\Phi(\boldsymbol{x}, t)$ denote the drift vector field defined by the RHS of Eq. \eqref{doded}. Standard inference numerically integrates this ODE by approximating the trajectory as locally linear within a step $\Delta t$ (e.g., a first-order Euler step $\boldsymbol{x}_{t-1} \approx \boldsymbol{x}_t + \Delta t \cdot \Phi(\boldsymbol{x}_t, t)$). However, due to the lack of retraining, the generative manifold retains significant curvature. This geometric complexity, coupled with the inherent estimation variance of the model $\Phi$, renders the trajectory highly sensitive to local prediction errors. Standard solvers, which blindly follow the instantaneous tangent, accumulate these errors as geometric drift.

\section{Method}
\label{sec:method}

We propose Forward-Process Aligned Diffusion prediction (DiFA), a theory-inspired and principled algorithm for inference-time clean-signal prediction alignment. DiFA is a training-free framework and does not replace existing numerical solvers. Instead, it operates at the interface between the pretrained denoiser and the downstream solver by refining  the data prediction supplied to each solver step.  From the perspective of the forward process of diffusion models, the
central idea is to reframe the inference-time predictions along the reverse
trajectory as temporally related observations of a locally stable
clean-signal anchor. DiFA uses the common-anchor geometry and the
noise-level-dependent reliability structure induced by the forward process
to filter these multi-step predictions before they are passed to the
downstream solver. 

\subsection{Diffusion Sampling as Sequential Estimation}
\label{sec:sequential_estimation} 

During inference, the denoiser produces a sequence of clean-signal predictions,
\begin{equation}\label{eq:observation_model}
\hat{\boldsymbol{x}}_{0}^{(t_i)} = D_{\theta}(\boldsymbol{x}_{t_i},t_i), \qquad i=1,\ldots,N.
\end{equation}
Assuming that reverse inference proceeds in the order $t_N > t_{N-1} > \cdots > t_0$, for a current reverse step $t_i$, let
\begin{equation}
\operatorname{W}_{K}(t_i) \subseteq \{t_j:j>i\}, \qquad |\operatorname{W}_{K}(t_i)|\leq K,
\end{equation}
denote a causal temporal window of recently processed historical steps, and define
\begin{equation}
\operatorname{W}_{K}^{+}(t_i) = \operatorname{W}_{K}(t_i)\cup\{t_i\}.
\end{equation}
Rather than treating the predictions in this local neighborhood as isolated one-step estimates, we view them as temporally correlated observations of a locally stable trajectory-implied clean anchor. For $t_j\in\operatorname{W}_{K}^{+}(t_i)$, we write
\begin{equation}\label{eq:practical_prediction_model}
\hat{\boldsymbol{x}}_{0}^{(t_j)} = \boldsymbol{x}_{0,\star}^{(t_i)} + \boldsymbol{b}_{j\mid i} + \boldsymbol{\xi}_{j\mid i}, ~  t_j\in\operatorname{W}_{K}^{+}(t_i), 
\end{equation}   
where $\boldsymbol{x}_{0,\star}^{(t_i)}$ denotes the trajectory-implied clean anchor, assumed locally stable within this neighborhood, while $\boldsymbol{b}_{j\mid i}$ and $\boldsymbol{\xi}_{j\mid i}$ capture the systematic bias and residual uncertainty with respect to this anchor, respectively. 

Due to distribution shifts in few-step trajectories, predictions from MSE-trained denoisers may exhibit systematic bias and temporal correlation. The reliability ordering used by DiFA is instead induced by the normalized forward process. From Eq.~\eqref{transitionkernel}, the forward process admits the equivalent signal-normalized representation,
\begin{equation}
\bar{\boldsymbol{x}}_t := \frac{\boldsymbol{x}_t}{\alpha_t} = \boldsymbol{x}_0 + \frac{\sigma_t}{\alpha_t}\boldsymbol{\epsilon} = \boldsymbol{x}_0 + \operatorname{SNR}(t)^{-1/2}\boldsymbol{\epsilon},   
\end{equation} 
where $\boldsymbol{\epsilon}\sim\mathcal{N}(\boldsymbol{0},\boldsymbol{I})$. 
Accordingly,
\begin{equation}\label{signal-normalizedrep}
\bar{\boldsymbol{x}}_t \mid \boldsymbol{x}_0 \sim \mathcal{N} \left( \boldsymbol{x}_0, \operatorname{SNR}(t)^{-1}\boldsymbol{I} \right).
\end{equation}
Therefore, in the forward-corruption space, states at different noise levels share the same clean-data anchor ($\boldsymbol{x}_0$), and $\operatorname{SNR}(t)$ exactly characterizes the conditional observation precision of the corresponding forward observation.

This forward statistical geometry motivates DiFA's treatment of the
reverse-trajectory predictions: although learned denoiser errors may be biased and temporally correlated rather than exactly following the forward observation covariance, the prediction sequence can still be organized and filtered according to the common-anchor structure and the noise-level reliability ordering induced by the forward process. While existing numerical solvers widely utilize historical outputs to refine temporal integration, they fail to explicitly and dynamically construct a forward-aligned reference anchor to rectify prediction drift during inference.
 
\subsection{Forward-Process Alignment Principle}
\label{sec:optimal_fusion}  

Based on the forward-induced common-anchor geometry established above, we now formalize the alignment principle underlying DiFA in Algorithm~\ref{alg:difa}.  

\begin{definition}[Forward-Process Alignment]\label{def:fpa}
Let $\{\hat{\boldsymbol{x}}_{0}^{(t_i)}\}_{i=1}^{N}$ denote the data predictions along a reverse trajectory.  
Forward-Process Alignment is the principle that inference-time prediction refinement should be constructed relative to the common-anchor geometry and noise-level reliability ordering induced by the forward noising process. 
\end{definition}

This principle is instantiated by first constructing a temporally stable reference anchor in the clean-prediction space and then using the anchor-relative deviation for current-step refinement. Accordingly, forward-process alignment specifies the reference coordinate and reliability organization for prediction refinement. 
To derive a canonical consensus rule implied by this principle, we
introduce an idealized and  tractable forward-aligned
independent-view observation model. This model preserves the common-anchor geometry and inverse-SNR reliability structure of the normalized forward process, while serving as an analytical proxy rather than an exact model of the learned denoiser errors: 
\begin{equation}
\begin{aligned}
\boldsymbol{y}_{i} &= \boldsymbol{x}_{0,\star} + \boldsymbol{\eta}_{i}, \\
\boldsymbol{\eta}_{i}&\overset{\mathrm{ind}}{\sim} \mathcal{N} \left( \boldsymbol{0}, R_i\boldsymbol{I} \right), ~~R_i = \frac{c}{\operatorname{SNR}(t_i)}, 
\end{aligned}
\label{eq:kalman_ideal_observation_model}
\end{equation} 
where $c>0$ is a common scale constant  and $\boldsymbol{x}_{0,\star}$ denotes the static anchor in this ideal analytical model.  
It is the idealized counterpart of the locally stable trajectory-implied anchor $\boldsymbol{x}_{0,\star}^{(t_i)}$ in Eq.~\eqref{eq:practical_prediction_model}.  
This model provides a canonical analytical basis for deriving the
consensus estimator.

Then, we seek a linear unbiased estimator of $\boldsymbol{x}_{0,\star}$ whose weights minimize the isotropic estimation covariance:
\begin{equation}
\begin{aligned}
\hat{\boldsymbol{x}}_{0,\mathrm{ideal}}^{\star} &= \sum_{i=1}^{n} w_i\boldsymbol{y}_{i}, ~~ \sum_{i=1}^{n}w_i=1, \\
\left\{ w_i^{\star} \right\}_{i=1}^{n} &= \underset{\sum_{i=1}^{n}w_i=1}{\arg\min} \; \sum_{i=1}^{n} w_i^{2}R_i .
\end{aligned}
\label{eq:ideal_blue_objective}
\end{equation}
This yields the precision-weighted estimator
\begin{equation}
\hat{\boldsymbol{x}}_{0,\mathrm{ideal}}^{\star} = \frac{ \sum_{i=1}^{n} \operatorname{SNR}(t_i) \boldsymbol{y}_{i} }{ \sum_{i=1}^{n} \operatorname{SNR}(t_i) },
\label{eq:ideal_snr_fusion}
\end{equation}
where the SNR-dependent weights follow from the inverse-SNR covariance form of the forward-aligned observation.

\begin{proposition}[Variance Reduction under Ideal Forward-Aligned Observations]
\label{prop:variance}
Under the observation model in Eq.~\eqref{eq:kalman_ideal_observation_model},  the fused anchor estimator satisfies
\begin{equation}
\operatorname{Cov} \left( \hat{\boldsymbol{x}}_{0,\mathrm{ideal}}^{\star} \right) = \left( \sum_{i=1}^{n} R_i^{-1} \right)^{-1} \boldsymbol{I} = \frac{c}{ \sum_{i=1}^{n} \operatorname{SNR}(t_i) } \boldsymbol{I},
\label{eq:ideal_consensus_covariance}
\end{equation}
which is strictly smaller, in the positive-semidefinite ordering, than the covariance of any individual observation for $n\geq2$. The proof is provided in Appendix~\ref{sec:proof_variance}.
\end{proposition}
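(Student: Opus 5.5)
We compute the covariance of the fused estimator in Eq.~\eqref{eq:ideal_snr_fusion} directly from the independent-view model in Eq.~\eqref{eq:kalman_ideal_observation_model}. We write the weights as $w_i^{\star}=R_i^{-1}/\sum_{k}R_k^{-1}$. Since $R_i^{-1}=\operatorname{SNR}(t_i)/c$, the common constant $c$ cancels and these coincide with the SNR weights of Eq.~\eqref{eq:ideal_snr_fusion}. As a side check, these weights are the solution of Eq.~\eqref{eq:ideal_blue_objective}: a Lagrange multiplier on $\sum_i w_i=1$ gives $2w_iR_i=\lambda$, hence $w_i\propto R_i^{-1}$. Strict convexity of $\sum_i w_i^2R_i$ makes this the unique minimizer. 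Because $\sum_i w_i^{\star}=1$, the estimator equals $\boldsymbol{x}_{0,\star}+\sum_i w_i^{\star}\boldsymbol{\eta}_i$. It is therefore unbiased, and its covariance is that of the weighted noise sum.

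By independence of the $\boldsymbol{\eta}_i$, cross-covariances vanish, so
\begin{equation*}
\operatorname{Cov}\Bigl(\sum_i w_i^{\star}\boldsymbol{\eta}_i\Bigr)=\sum_i (w_i^{\star})^2R_i\,\boldsymbol{I}=\frac{\sum_i R_i^{-2}R_i}{\bigl(\sum_k R_k^{-1}\bigr)^2}\boldsymbol{I}=\Bigl(\sum_i R_i^{-1}\Bigr)^{-1}\boldsymbol{I}.
\end{equation*}
Substituting $R_i^{-1}=\operatorname{SNR}(t_i)/c$ gives $c/\sum_i\operatorname{SNR}(t_i)\,\boldsymbol{I}$, which is the first claim.

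For the strict ordering, fix any observation $j$; its covariance is $R_j\boldsymbol{I}$. Since $n\ge2$ and every $R_i$ is finite and positive, $\sum_i R_i^{-1}>R_j^{-1}$, so $\bigl(\sum_i R_i^{-1}\bigr)^{-1}<R_j$. The difference $\bigl(R_j-(\sum_iR_i^{-1})^{-1}\bigr)\boldsymbol{I}$ is a positive multiple of the identity. It is therefore positive definite, which in particular gives the strict PSD ordering.

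The only delicate point is the strictness, which requires $\operatorname{SNR}(t_i)\in(0,\infty)$ for all $i$. I would state this explicitly: it excludes the endpoints $t$ with $\sigma_t=0$ or $\alpha_t=0$, where $R_i$ degenerates. Everything else is a routine computation.
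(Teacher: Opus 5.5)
Your proof is correct and follows essentially the same route as the paper. Lagrangian stationarity gives $w_i \propto R_i^{-1}$, independence gives $\sum_i w_i^2 R_i\,\boldsymbol{I} = (\sum_i R_i^{-1})^{-1}\boldsymbol{I}$, and strictness follows from $\sum_i R_i^{-1} > R_j^{-1}$ for $n\ge 2$. You add two small refinements the paper leaves implicit: uniqueness of the minimizer via strict convexity, and the requirement $\operatorname{SNR}(t_i)\in(0,\infty)$ needed for strictness.
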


This canonical anchor estimator can be reformulated as the recursive Kalman realization described in the next section. 

\subsection{Connection to Kalman Filtering}\label{sec:kalman}  

The anchor estimator in Eq.~\eqref{eq:ideal_snr_fusion} admits an equivalent recursive realization through static-state Kalman updates. 
Under the independent-view, each observation provides precision-weighted evidence about the shared anchor, allowing the full-history estimator to be implemented recursively.

\begin{theorem}[Recursive Kalman Equivalence under the Ideal Independent-View Model]
\label{thm:kalman_equivalence}
Under the ideal independent-view observation model in Eq.~\eqref{eq:kalman_ideal_observation_model}, initialize the static-state recursive estimator by
\[
\hat{\boldsymbol{x}}_{0,1}^{\operatorname{rec}} = \boldsymbol{y}_{1}, \qquad p_1 = \frac{c}{\operatorname{SNR}(t_1)}.
\]
For $i=2,\ldots,n$, assimilate $\boldsymbol{y}_i$ using the static-state Kalman update with observation covariance $c\,\operatorname{SNR}(t_i)^{-1}\boldsymbol{I}$. Then
\begin{equation}
\hat{\boldsymbol{x}}_{0,n}^{\operatorname{rec}} = \frac{ \sum_{i=1}^{n} \operatorname{SNR}(t_i)\boldsymbol{y}_i }{ \sum_{i=1}^{n} \operatorname{SNR}(t_i) } = \hat{\boldsymbol{x}}_{0,\operatorname{ideal}}^{\star}.
\end{equation}
The proof is provided in Appendix~\ref{sec:proof_kalman}.
\end{theorem}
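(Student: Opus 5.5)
The plan is an induction on $n$ with the inductive hypothesis
\[
\hat{\boldsymbol{x}}_{0,i}^{\operatorname{rec}} = \frac{\sum_{j\le i}\operatorname{SNR}(t_j)\boldsymbol{y}_j}{S_i},\qquad p_i = \frac{c}{S_i},\qquad S_i:=\sum_{j\le i}\operatorname{SNR}(t_j).
\]
Keeping track of the scalar covariance $p_i$ along with the mean is what makes the induction close. Because the covariance is isotropic, $P_i=p_i\boldsymbol{I}$ and $R_i\boldsymbol{I}$, every matrix is a scalar multiple of the identity, so the whole computation reduces to scalar algebra.

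\emph{Base case.} For $i=1$ the claim holds directly, since $\hat{\boldsymbol{x}}_{0,1}^{\operatorname{rec}}=\boldsymbol{y}_1$ and $p_1=c/\operatorname{SNR}(t_1)=c/S_1$.

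\emph{Update equations.} We first write out the static-state Kalman update explicitly. The state transition is the identity and there is no process noise, so the predicted covariance equals $p_{i-1}$. The update is then
\[
k_i=\frac{p_{i-1}}{p_{i-1}+R_i},\qquad \hat{\boldsymbol{x}}_{0,i}^{\operatorname{rec}}=\hat{\boldsymbol{x}}_{0,i-1}^{\operatorname{rec}}+k_i\bigl(\boldsymbol{y}_i-\hat{\boldsymbol{x}}_{0,i-1}^{\operatorname{rec}}\bigr),\qquad p_i=(1-k_i)p_{i-1}.
\]

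\emph{Inductive step.} Substitute $p_{i-1}=c/S_{i-1}$ and $R_i=c/\operatorname{SNR}(t_i)$. The constant $c$ cancels and gives
\[
k_i=\frac{1/S_{i-1}}{1/S_{i-1}+1/\operatorname{SNR}(t_i)}=\frac{\operatorname{SNR}(t_i)}{S_{i-1}+\operatorname{SNR}(t_i)}=\frac{\operatorname{SNR}(t_i)}{S_i}.
\]
It follows that $1-k_i=S_{i-1}/S_i$, and hence $p_i=c/S_i$. The mean update can be rewritten as $(1-k_i)\hat{\boldsymbol{x}}_{0,i-1}^{\operatorname{rec}}+k_i\boldsymbol{y}_i$. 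Inserting the inductive hypothesis, this equals $\bigl(\sum_{j<i}\operatorname{SNR}(t_j)\boldsymbol{y}_j+\operatorname{SNR}(t_i)\boldsymbol{y}_i\bigr)/S_i$, which completes the induction. Taking $i=n$ yields the claimed formula, and comparison with Eq.~\eqref{eq:ideal_snr_fusion} identifies it with $\hat{\boldsymbol{x}}_{0,\operatorname{ideal}}^{\star}$.

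\emph{Covariance consistency.} As a cross-check, $p_n=c/S_n$ agrees with the covariance in Proposition~\ref{prop:variance}. Together with the standard fact that the Kalman covariance recursion is exact in the linear-Gaussian setting, this confirms that $p_i$ is the true error variance at every step.

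The main obstacle is not the algebra but fixing what "static-state Kalman update" means precisely: an identity transition, zero process noise, and the prior initialized by the first observation. Once this is stated explicitly, the rest is routine. A cleaner alternative is the information form, $p_i^{-1}=p_{i-1}^{-1}+R_i^{-1}$ and $p_i^{-1}\hat{\boldsymbol{x}}_{0,i}^{\operatorname{rec}}=p_{i-1}^{-1}\hat{\boldsymbol{x}}_{0,i-1}^{\operatorname{rec}}+R_i^{-1}\boldsymbol{y}_i$. Telescoping these two relations gives the result immediately, and this form could be cited instead.
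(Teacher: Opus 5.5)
Your proof is correct and is essentially the paper's argument. The paper uses the same static-state gain $k_i=p_{i-1}/(p_{i-1}+R_i)$ and covariance update $p_i=(1-k_i)p_{i-1}$, then telescopes the information-form recursions $p_i^{-1}=p_{i-1}^{-1}+R_i^{-1}$ and $p_i^{-1}\hat{\boldsymbol{x}}_{0,i}^{\operatorname{rec}}=p_{i-1}^{-1}\hat{\boldsymbol{x}}_{0,i-1}^{\operatorname{rec}}+R_i^{-1}\boldsymbol{y}_i$ (the alternative you mention at the end), whereas your induction on the closed forms $k_i=\operatorname{SNR}(t_i)/S_i$ and $p_i=c/S_i$ is the same computation written out step by step.
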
 
Theorem~\ref{thm:kalman_equivalence} shows that the ideal
forward-aligned anchor estimator admits an exact recursive realization
through static-state Kalman updates under the independent-view model.
Motivated by this result, DiFA constructs a causal historical consensus in
practical reverse inference using structural and noise-level compatibility
to account for deviations from the ideal model. The current prediction is
then refined relative to this consensus anchor. 
 
\begin{figure}[t]
    \centering
    \begin{subfigure}[b]{0.49\linewidth}
        \centering
        \centerline{ Baseline Solver  }
        \vspace{4pt} 
        \includegraphics[width=\linewidth]{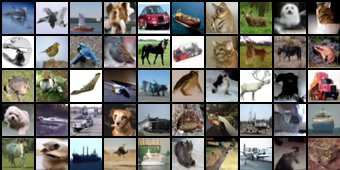}
    \end{subfigure}
    \hfill 
    \begin{subfigure}[b]{0.49\linewidth}
        \centering
        \centerline{ DiFA Refined Prediction}
        \vspace{4pt}
        \includegraphics[width=\linewidth]{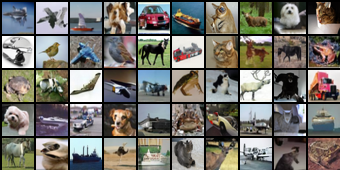}
    \end{subfigure}
    \caption{Visual quality comparison on CIFAR-10 using pre-trained EDM weights. Compared to the baseline solver, incorporating our DiFA framework significantly enhances visual fidelity and edge sharpness under identical few-step inference budgets.}
    \label{fig:dpmdifacifar10}
\end{figure}
 
\subsection{Forward-Aligned Causal Temporal Consensus} 
\label{sec:tcg}

Guided by the forward-aligned anchor interpretation established above, DiFA constructs a practical reference anchor from recently processed clean-signal predictions. We parameterize the noise level using logSNR, $
\ell_i = \log \operatorname{SNR}(t_i) = \log \frac{\alpha_{t_i}^{2}}{\sigma_{t_i}^{2}} 
$. As a monotonic reparameterization of SNR, logSNR preserves the forward-induced reliability ordering while providing improved numerical stability when comparing predictions across a wide dynamic range.   

At step $t_i$, DiFA maintains the causal history buffer:
\begin{equation}
\operatorname{H}_{K}(t_i) = \left\{ \left( \hat{\boldsymbol{x}}_{0}^{(t_j)}, \ell_j \right) \right\}_{t_j\in\operatorname{W}_{K}(t_i)} .
\label{eq:history_buffer}
\end{equation}
Only previously processed predictions are aggregated as historical reference values, so that the current prediction remains available for subsequent anchor-relative refinement. The temporal consensus is defined as:
\begin{equation}
\hat{\boldsymbol{x}}_{0}^{\operatorname{cons}}(t_i) = \sum_{t_j\in\operatorname{W}_{K}(t_i)} a_{ij} \tilde{\boldsymbol{x}}_{0}^{(t_j)}, \sum_{t_j\in\operatorname{W}_{K}(t_i)} a_{ij}=1,
\label{eq:consensus_prediction}
\end{equation}
where $a_{ij}\geq 0$, $\tilde{\boldsymbol{x}}_{0}^{(t_j)}$ denotes an optionally aligned historical prediction. The normalized weights $a_{ij}$ favor historical candidates that are structurally compatible with the current prediction and compatible with its noise level in the logSNR coordinate. The current prediction is used as a query for reference construction, but is not itself aggregated into the reference anchor. This exclusion prevents trivial self-reinforcement and preserves a non-degenerate anchor-relative deviation for subsequent refinement. 

The resulting consensus serves as a causal historical reference anchor in the clean-prediction space. The specific alignment and compatibility-weighting rules used in our experiments are provided in Appendix~\ref{sec:implementation_details}.

\begin{algorithm}[t]
\caption{DiFA: Forward-Process Aligned Inference for Diffusion Models}
\label{alg:difa}
\begin{algorithmic}[1]
\REQUIRE   $D_\theta$, downstream solver,   $\{t_N,\ldots,t_0\}$
\REQUIRE Window size $K$, guidance strength $\omega$
\STATE Initialize history buffer $\operatorname{H}_K \leftarrow \emptyset$
\STATE Sample initial noise $\boldsymbol{x}_{t_N}\sim\mathcal{N}(\boldsymbol{0},\boldsymbol{I})$
\FOR{$i=N,\ldots,1$}
\STATE $\hat{\boldsymbol{x}}_0^{(t_i)} \leftarrow D_\theta(\boldsymbol{x}_{t_i},t_i)$
\STATE $\ell_i \leftarrow \log \operatorname{SNR}(t_i)$
\IF{$\operatorname{H}_K$ is empty}
\STATE $\hat{\boldsymbol{x}}_0^{\mathrm{DiFA}}(t_i) \leftarrow \hat{\boldsymbol{x}}_0^{(t_i)}$
\ELSE
\STATE Construct reference anchor
$\hat{\boldsymbol{x}}_0^{\operatorname{cons}}(t_i)$
using Eq.~\eqref{eq:consensus_prediction}  
\STATE $\boldsymbol{r}_{t_i} \leftarrow \hat{\boldsymbol{x}}_0^{(t_i)} - \hat{\boldsymbol{x}}_0^{\mathrm{cons}}(t_i)$
\STATE Construct deviation guidance
$\boldsymbol{g}_{t_i}$
using Eq.~\eqref{eq:deviation_guidance} 
\STATE $\hat{\boldsymbol{x}}_0^{\mathrm{DiFA}}(t_i) \leftarrow \hat{\boldsymbol{x}}_0^{(t_i)} + \omega\boldsymbol{g}_{t_i}$
\ENDIF
\STATE $\boldsymbol{x}_{t_{i-1}} \leftarrow \operatorname{SolverStep} \left(\boldsymbol{x}_{t_i}, \hat{\boldsymbol{x}}_0^{\mathrm{DiFA}}(t_i), t_i, t_{i-1} \right)$
\STATE Update $\operatorname{H}_K$ with $(\hat{\boldsymbol{x}}_0^{(t_i)},\ell_i)$
\ENDFOR

\RETURN $\boldsymbol{x}_{t_0}$
\end{algorithmic}
\end{algorithm}
\subsection{Anchor-Relative Deviation Guidance}
\label{sec:boost}  
The temporal consensus acts as a multi-step filtered reference that captures information stable across recent reverse steps. Directly substituting this filtered reference for the instantaneous prediction, however, may attenuate components expressed more strongly at the current step. DiFA therefore uses the consensus as a reference anchor and defines the anchor-relative deviation:
\begin{equation}
\boldsymbol{r}_{t_i} = \hat{\boldsymbol{x}}_{0}^{(t_i)} - \hat{\boldsymbol{x}}_{0}^{\operatorname{cons}}(t_i).
\label{eq:anchor_relative_deviation}
\end{equation}
Because this deviation may contain both informative components and
unstable fluctuations, DiFA applies a noise-level-conditioned gain
modulation operator:
\begin{equation}
\boldsymbol{g}_{t_i} = \operatorname{G} \left( \boldsymbol{r}_{t_i}, \hat{\boldsymbol{x}}_{0}^{(t_i)}, \ell_i \right),
\label{eq:deviation_guidance}
\end{equation}
where $\operatorname{G}(\cdot)$ is designed to suppress unreliable deviation components while retaining current-step refinement.

The refined clean-signal prediction is given by:
\begin{equation}
\hat{\boldsymbol{x}}_{0}^{\operatorname{DiFA}}(t_i) = \hat{\boldsymbol{x}}_{0}^{(t_i)} + \omega \boldsymbol{g}_{t_i}, ~\omega\geq 0.
\label{eq:difa_refinement}
\end{equation}
Thus, DiFA performs anchor-relative additive refinement: the temporal consensus defines the reference coordinate, while the modulated deviation provides the correction direction. In the lightweight instantiation evaluated in this work, $\operatorname{G}(\cdot)$ preserves informative residual-oriented components by suppressing prediction-parallel magnitude variation and modulating spatial-frequency deviation components according to the current logSNR. Its complete formulation and all hyperparameter settings are detailed in Appendix~\ref{sec:implementation_details}.

\subsection{Algorithm and Discussion}
\label{sec:algorithm}
Algorithm~\ref{alg:difa} summarizes the complete DiFA procedure. At each timestep, the pretrained denoiser first produces an instantaneous clean-signal prediction $\hat{\boldsymbol{x}}_0^{(t_i)}$. DiFA then refines this
prediction through forward-aligned causal temporal consensus
and anchor-relative deviation guidance. The corrected prediction $\hat{\boldsymbol{x}}_0^{\mathrm{DiFA}}(t_i)$ is finally passed to the downstream solver. Therefore, DiFA does not modify the numerical integration rule itself. Instead, it improves the clean prediction consumed by the solver.

\begin{figure*}[ht] 
    \centering
    \setlength{\tabcolsep}{1pt}    
    \renewcommand{\arraystretch}{0.1} 
    \newcommand{\imgw}{0.088\linewidth}
      
    \newcommand{\mainlabel}[1]{%
        \raisebox{-0.5\height}{\rotatebox{90}{\sffamily\small\textbf{#1}}}%
    }
    \newcommand{\sublabel}[1]{%
        \raisebox{-0.5\height}{\rotatebox{90}{\sffamily\tiny #1}}%
    }
    \newcommand{\vcimg}[1]{%
        \raisebox{-0.5\height}{\includegraphics[width=\imgw]{#1}}%
    }
\begin{tabular}{c c}
         
        \mainlabel{20 NFE} & 
        \begin{tabular}{c c c c c c c c c c c}
            \sublabel{Euler} & \vcimg{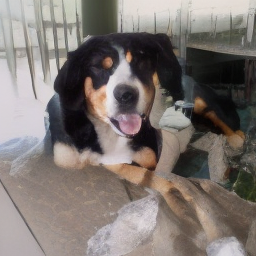} & \vcimg{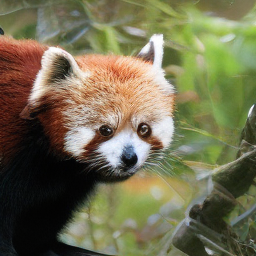} & \vcimg{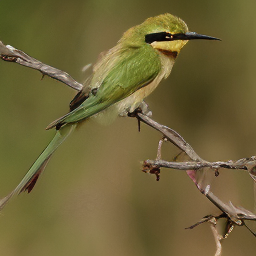} & \vcimg{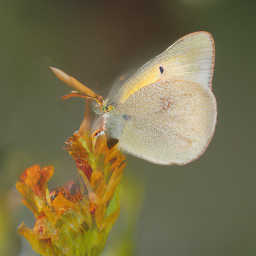} & \vcimg{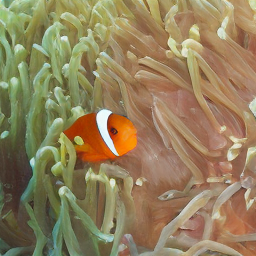} & \vcimg{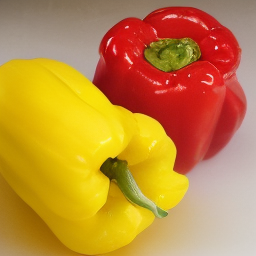} & \vcimg{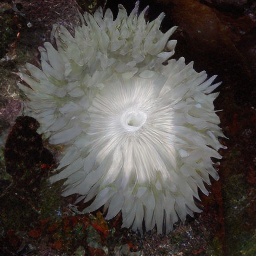} & \vcimg{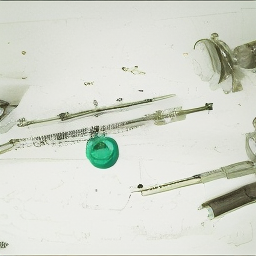} & \vcimg{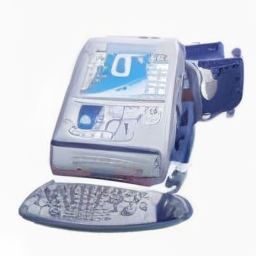} & \vcimg{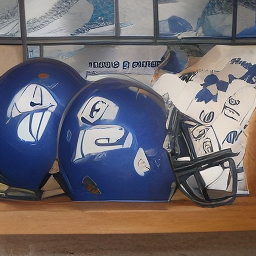} \\
            \sublabel{DiFA}  & \vcimg{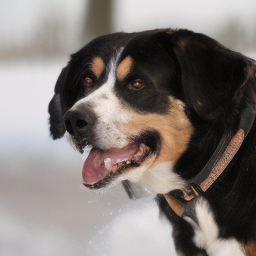} & \vcimg{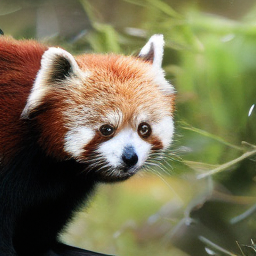} & \vcimg{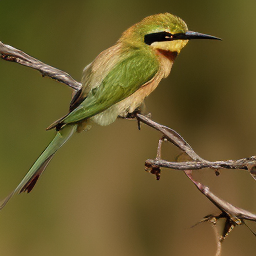} & \vcimg{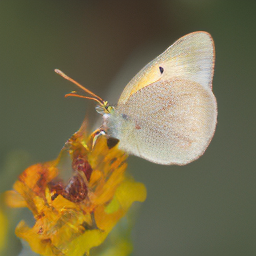} & \vcimg{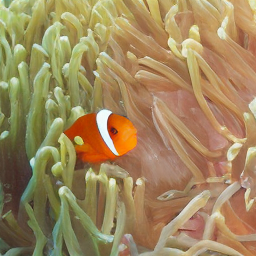} & \vcimg{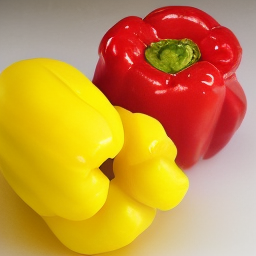} & \vcimg{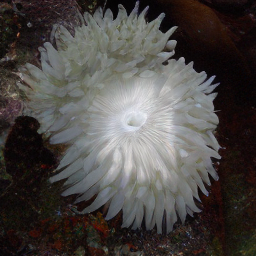} & \vcimg{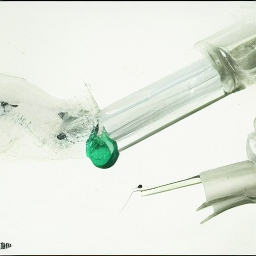} & \vcimg{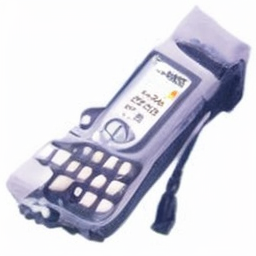} & \vcimg{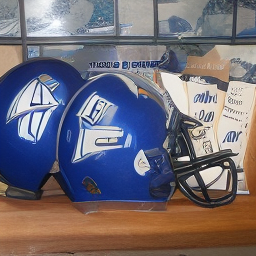} \\\noalign{\vspace{0.8em}}
            
            \sublabel{Heun}  & \vcimg{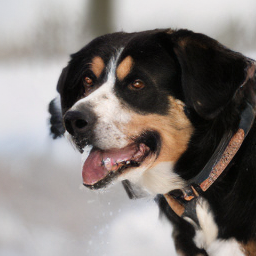} & \vcimg{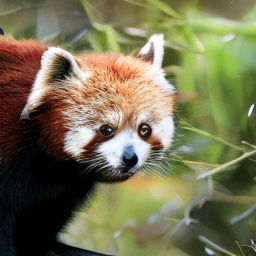} & \vcimg{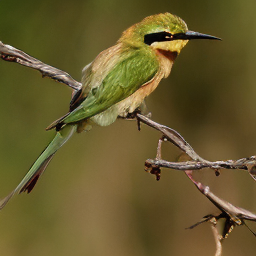} & \vcimg{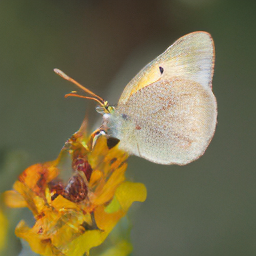} & \vcimg{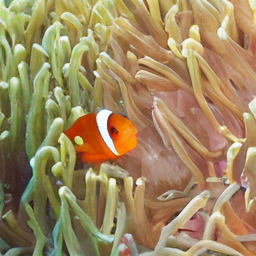} & \vcimg{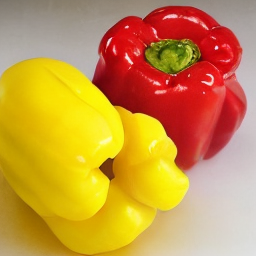} & \vcimg{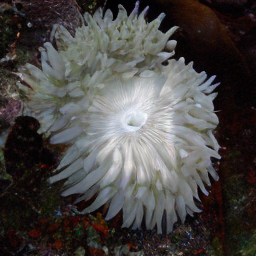} & \vcimg{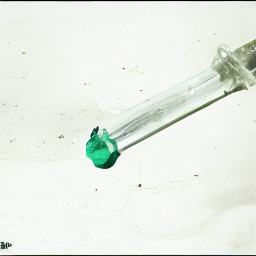} & \vcimg{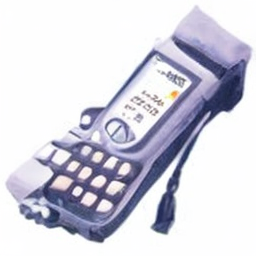} & \vcimg{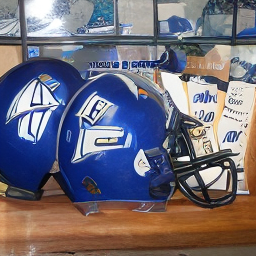} \\
            \sublabel{DiFA}  & \vcimg{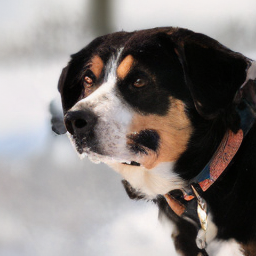} & \vcimg{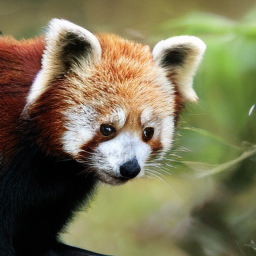} & \vcimg{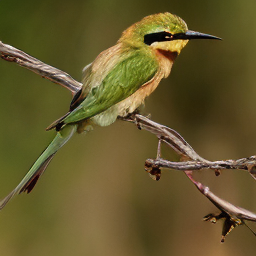} & \vcimg{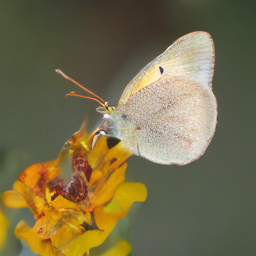} & \vcimg{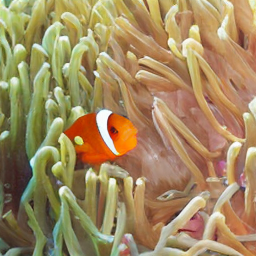} & \vcimg{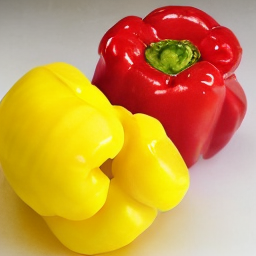} & \vcimg{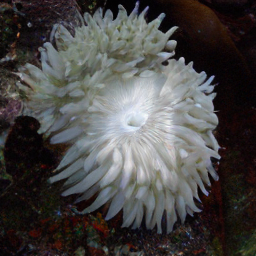} & \vcimg{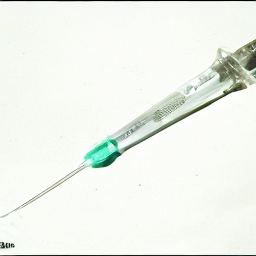} & \vcimg{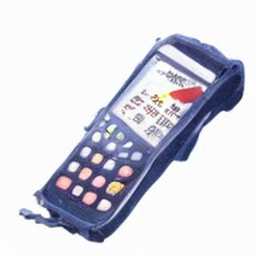} & \vcimg{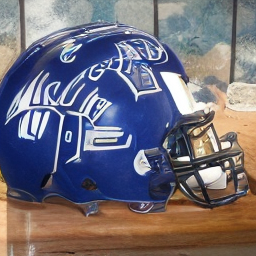} \\
        \end{tabular} \\
        
        \noalign{\vspace{0.4cm}} \hdashline \noalign{\vspace{0.4cm}}
 
        \mainlabel{5 NFE} & 
        \begin{tabular}{c c c c c c c c c c c}
            \sublabel{Euler} & \vcimg{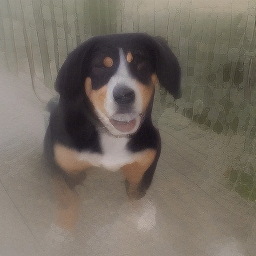} & \vcimg{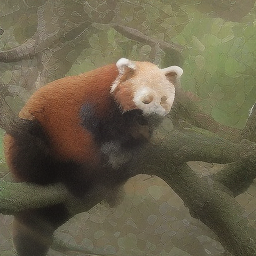} & \vcimg{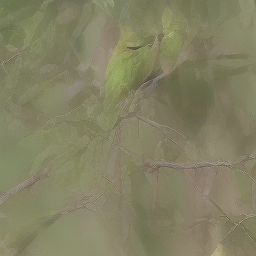} & \vcimg{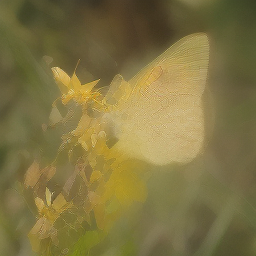} & \vcimg{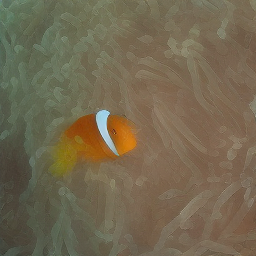} & \vcimg{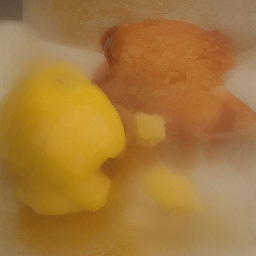} & \vcimg{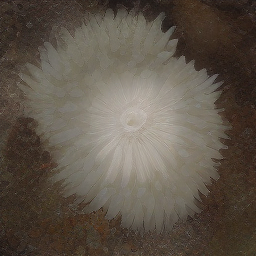} & \vcimg{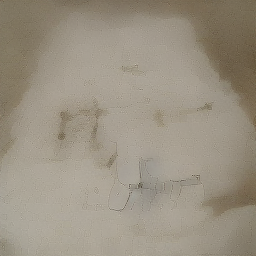} & \vcimg{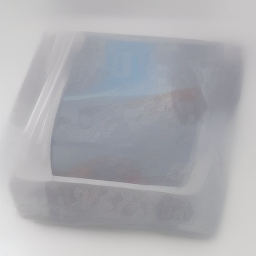} & \vcimg{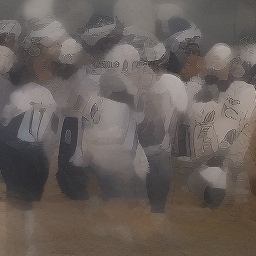} \\ 
            \sublabel{DiFA}  & \vcimg{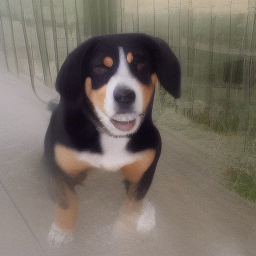} & \vcimg{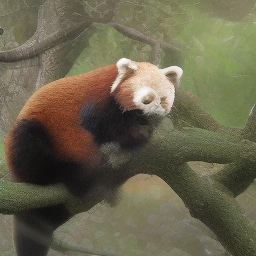} & \vcimg{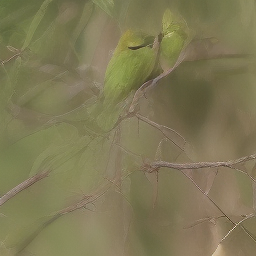} & \vcimg{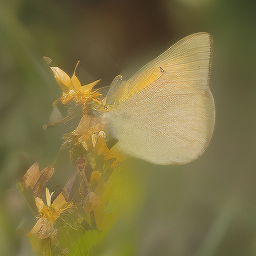} & \vcimg{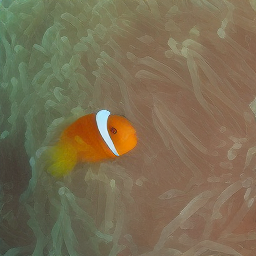} & \vcimg{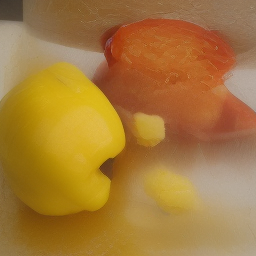} & \vcimg{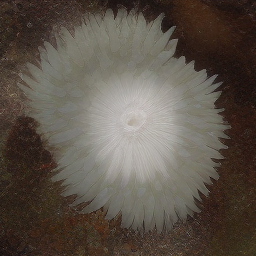} & \vcimg{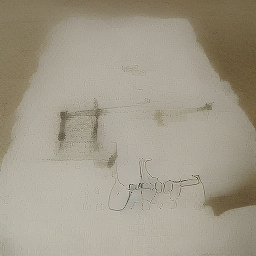} & \vcimg{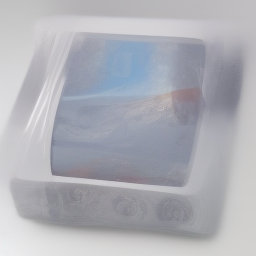} & \vcimg{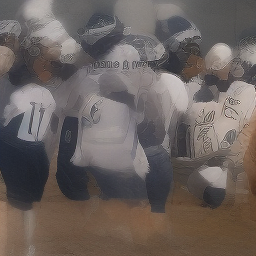} \\\noalign{\vspace{0.8em}}
            
            \sublabel{Heun}  & \vcimg{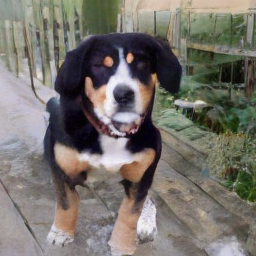} & \vcimg{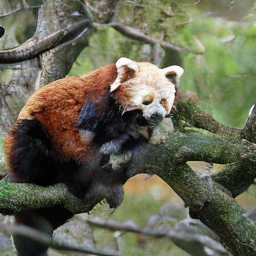} & \vcimg{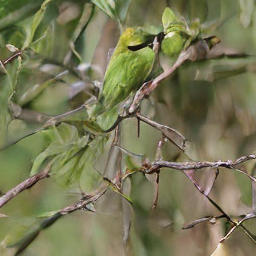} & \vcimg{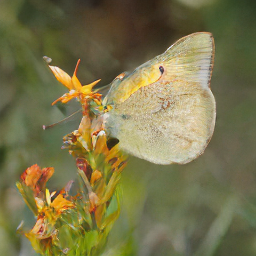} & \vcimg{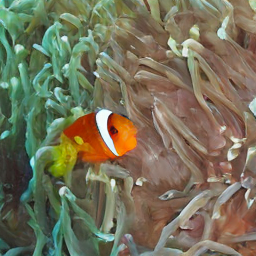} & \vcimg{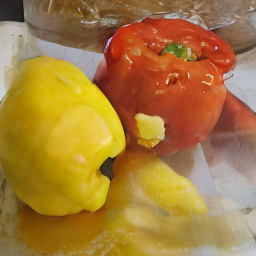} & \vcimg{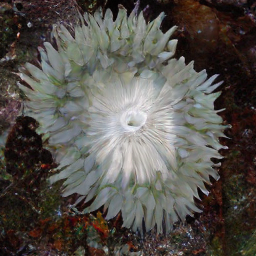} & \vcimg{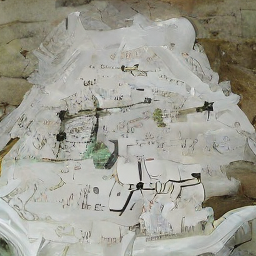} & \vcimg{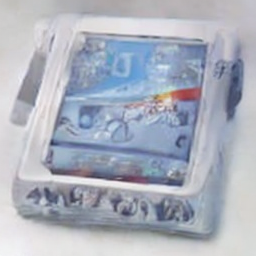} & \vcimg{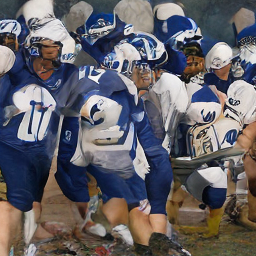} \\ 
            \sublabel{DiFA}  & \vcimg{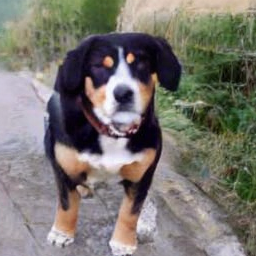} & \vcimg{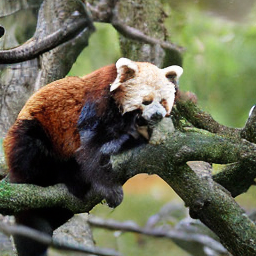} & \vcimg{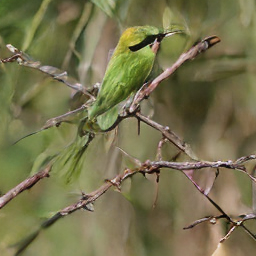} & \vcimg{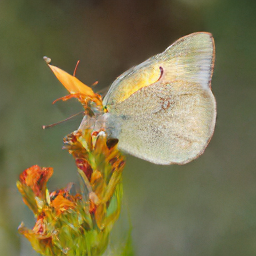} & \vcimg{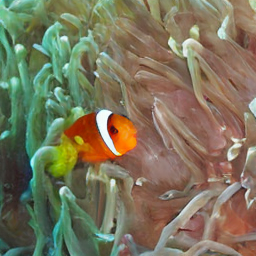} & \vcimg{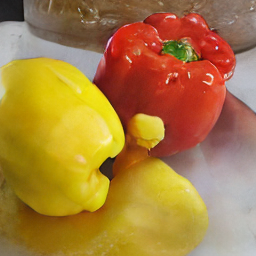} & \vcimg{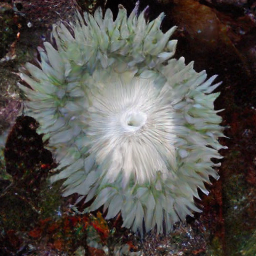} & \vcimg{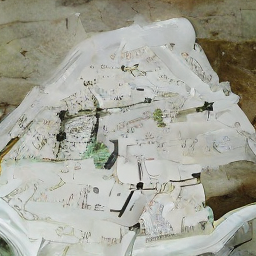} & \vcimg{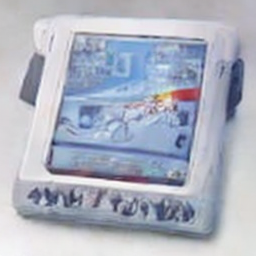} & \vcimg{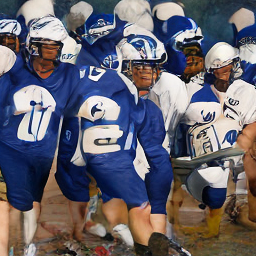} \\
        \end{tabular} \\
    \end{tabular}
    \caption{Visual quality comparison of generated samples on ImageNet 256 × 256 using the pre-trained SiT-XL/2 with a classifier-free guidance (CFG) scale of 1.5. SiT is a DiT-based architecture trained via Flow Matching.  We compare the baseline Euler and Heun samplers against their corresponding refined predictions utilizing our proposed DiFA framework. The results demonstrate that DiFA significantly enhances the visual fidelity and generation quality of the baseline samplers.}
    \label{fig:imagenet256sit}
\end{figure*}

DiFA is solver-compatible because it only changes the clean-signal estimate supplied to the solver. Any solver whose update can be expressed using a clean prediction, or converted to an equivalent clean-prediction parameterization, can use DiFA without changing its integration formula. The method introduces no additional network evaluations. Its extra cost comes only from maintaining a small prediction buffer, computing local similarity weights, and applying lightweight residual operations. With a fixed window size $K$, the additional cost is $\operatorname{O}(Kd)$ per step for a prediction of dimension $d$, which is negligible compared with NFE.

\begin{figure*}[t]
    \centering
    \begin{subfigure}[b]{0.48\textwidth}
        \centering
        \includegraphics[width=\linewidth]{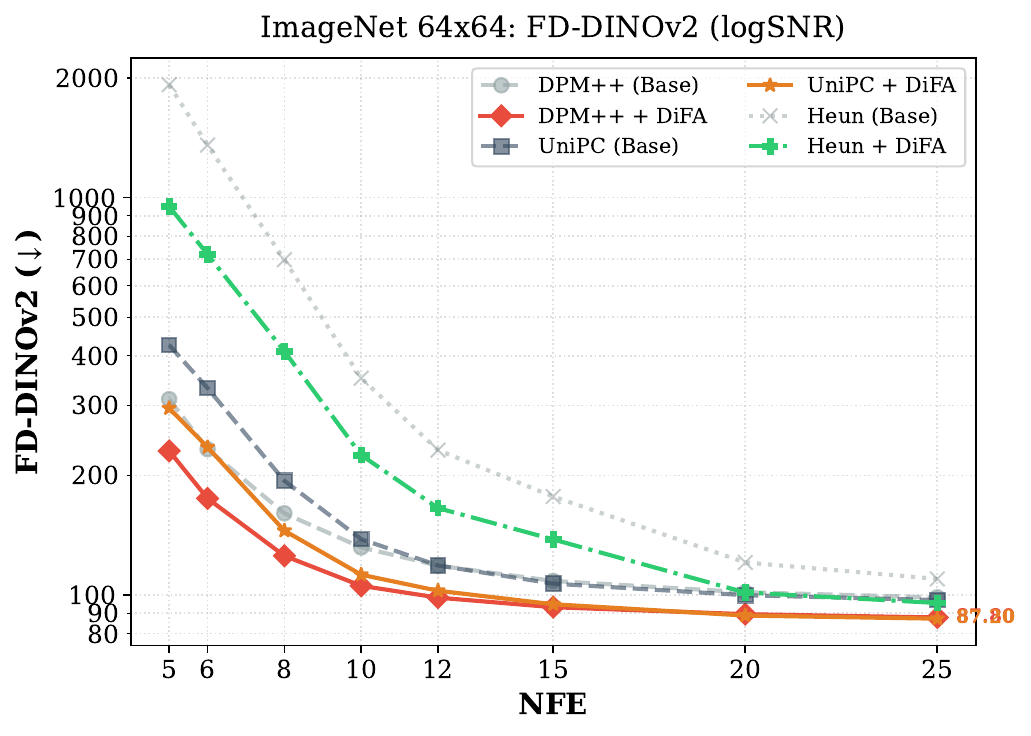}
        \caption{ImageNet: FD-DINOv2 (logSNR)}
    \end{subfigure}
    \hfill
    \begin{subfigure}[b]{0.48\textwidth}
        \centering
        \includegraphics[width=\linewidth]{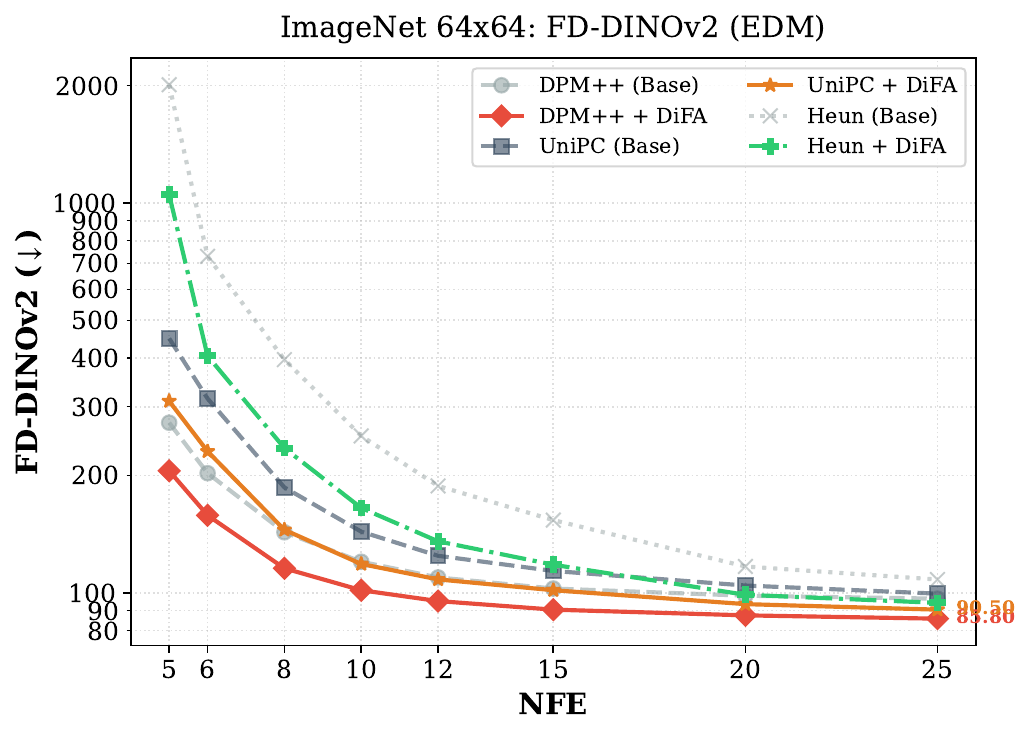}
        \caption{ImageNet: FD-DINOv2 (EDM)}
    \end{subfigure}
    \caption{Perceptual Quality (FD-DINOv2). DiFA also yields consistent improvements in FD-DINOv2 scores, indicating that our method enhances not just statistical fidelity (FID) but also perceptually aligned image quality.}
    \label{fig:dino_results}
\end{figure*}

\begin{table}[t]
\centering
\caption{Quantitative results on CIFAR-10 ($32\times32$). We compare DiFA against state-of-the-art GANs and diffusion models. DiFA (Ours) achieves competitive performance comparable to distillation methods (e.g., CTM) while requiring zero training. $\dagger$: Methods requiring training/distillation.}
\label{tab:cifar10_sota}
\resizebox{0.45\textwidth}{!}{ 
\begin{tabular}{llccc}
\toprule
Type & Model & \#Params & NFE & FID $\downarrow$ \\ \midrule
\multirow{5}{*}{GAN$^\dagger$} 
 & StyleGAN2-ADA \cite{karras2020training} & 20M & 1 & 2.92 \\
 & StyleGAN-XL \cite{sauer2022stylegan} & 18M & 1 & 1.85 \\
 & R3GAN \cite{huang2025r3gan} & 21M & 1 & 1.96 \\
 & CTM \cite{kim2023consistency} & 59M & 1 & 1.98 \\
 & SiD$^2$A \cite{zhou2024sid2a} & 56M & 1 & 1.50 \\ \midrule
\multirow{9}{*}{Diffusion} 
 & DDPM \cite{ho2020denoising} & 36M & 1000 & 3.17 \\
 & iDDPM \cite{nichol2021improved} & 50M & 4000 & 2.90 \\
 & DDIM \cite{song2021denoising} & 36M & 100 & 4.16 \\
 & DPM-Solver \cite{lu2022dpm} & 36M & 48 & 2.65 \\
 & DPM-Solver-v3 \cite{zheng2023dpm} & 36M & 25 & 2.00 \\
 & NCSN++ \cite{song2020score} & 108M & 2000 & 2.20 \\
 & LSGM \cite{vahdat2021score} & 376M & 138 & 2.10 \\
 & EDM \cite{karras2022elucidating} & 56M & 35 & 1.97 \\
 & EVODiff \cite{li2025evodiff} & 36M & 15 & 2.06 \\ \midrule
\multirow{2}{*}{Ours} 
 & DPM-Solver++ w/ DiFA & 36M & 15 & 2.02 \\
 & DPM-Solver++ w/ DiFA & 36M & 20 & \emph{1.96} \\ \bottomrule
\end{tabular}%
}
\end{table}

\begin{table}[t]
\centering
\caption{Results on class-conditional ImageNet-64. DiFA significantly outperforms standard diffusion baselines (e.g., ADM, iDDPM) with an order of magnitude fewer steps.}
\label{tab:imagenet_sota}
\resizebox{0.45\textwidth}{!}{ 
\begin{tabular}{llccc}
\toprule
Type & Model & \#Params & NFE & FID $\downarrow$ \\ \midrule
\multirow{5}{*}{GAN$^\dagger$} 
 & StyleGAN-XL \cite{sauer2022stylegan} & 135M & 1 & 1.51 \\
 & R3GAN \cite{huang2025r3gan} & 104M & 1 & 2.09 \\
 & CTM \cite{kim2023consistency} & 324M & 1 & 1.92 \\
 & DMD2 \cite{yin2024one} & 296M & 1 & 1.28 \\
 & SiD$^2$A \cite{zhou2024sid2a} & 296M & 1 & 1.11 \\ \midrule
\multirow{9}{*}{Diffusion} 
 & iDDPM \cite{nichol2021improved} & 270M & 250 & 2.92 \\
 & ADM \cite{dhariwal2021diffusion} & 296M & 250 & 2.07 \\
 & RIN \cite{jabri2022scalable} & 281M & 1000 & 1.23 \\
 & EDM \cite{karras2022elucidating} & 296M & 511 & 1.36 \\
 & VDM++ \cite{kingma2023understanding} & 296M & 511 & 1.43 \\
 & DisCo-Diff \cite{xu2024disco} & - & 623 & 1.22 \\
 & EDM2-S \cite{karras2024analyzing} & 280M & 63 & 1.58 \\ 
 & DPM-Solver++ (Base) & 296M & 25 & 1.83 \\ 
 & UniPC (Base) & 296M & 25 & 1.73 \\ \midrule
\multirow{2}{*}{Ours} 
 & DPM-Solver++ w/ DiFA & 296M & 25 & 1.64 \\
 & UniPC w/ DiFA & 296M & 25 & \emph{1.63} \\ \bottomrule
\end{tabular}%
}
\end{table}

\section{Experiments}
\label{sec:exp}

\subsection{Settings}
\noindent \textbf{Datasets \& Metrics.} We evaluate DiFA on widely adopted benchmarks: CIFAR-10 ($32\times32$) and ImageNet-64 ($64\times64$). To ensure a holistic assessment, we employ three complementary metrics: Fréchet Inception Distance (FID) \cite{Heusel2017gans} for distributional fidelity, Inception Score (IS) \cite{salimans2016improved} for sample diversity and clarity, and FD-DINOv2 \cite{stein2023exposing} for perceptually aligned quality assessment. 

\noindent \textbf{Implementation Details.} 
We use official pretrained diffusion checkpoints and evaluate DiFA with DDIM \cite{song2021denoising}, DPM-Solver++ \cite{lu2025dpm}, UniPC \cite{zhao2023unipc}, and Heun under the corresponding VP/logSNR and EDM parameterizations \cite{karras2022elucidating}.  
All experiments are conducted on NVIDIA RTX 4090 GPUs, 
except for the CFG-enabled systematic evaluation reported in
Table~\ref{tab:difa_sitxl2_imagenet256}, which is conducted on
NVIDIA H200 GPUs. NFE denotes the number of neural network
evaluations. 
Crucially, DiFA is implemented as a plug-and-play wrapper around existing solvers (DDIM, DPM-Solver++, UniPC, Heun) without any additional training or model fine-tuning. We use the same initial noise seeds for baselines and DiFA to ensure strict fairness. Code is available at \href{[https://github.com/ShiguiLi/DiFA](https://github.com/ShiguiLi/DiFA)}{DiFA}.

\subsection{Main Results}

We now present a comprehensive evaluation demonstrating that DiFA consistently improves the evaluated training-free solvers across various regimes, effectively bridging the gap between rapid prototyping and high-fidelity generation.

\noindent \textbf{Surpassing the Efficiency Barrier on CIFAR-10.}
As illustrated in the top row of Figure~\ref{fig:main_results}, the
qualitative comparison in Figure~\ref{fig:dpmdifacifar10}, and the SOTA
comparison in Table~\ref{tab:cifar10_sota}, DiFA substantially improves
the trade-off between quality and computational cost.  In the extremely low-budget regime (5--8 steps), where standard high-order solvers typically suffer from severe discretization errors due to unstable derivative estimation, DiFA effectively mitigates these issues via its temporal consensus mechanism. For instance, at NFE 8, DiFA reduces the FID of DPM-Solver++ from 8.40 to \textbf{4.15}, achieving a relative improvement of over \textbf{50\%} and turning severely degraded few-step outputs into high-quality samples. Moving to the mid-NFE regime (10--15 steps), which is critical for real-world applications, DiFA exhibits clear dominance. At \textbf{NFE 12}, it achieves an FID of \textbf{2.18} (compared to the baseline's 3.70), surpassing the baseline by a large margin and outperforming the reported 12-step result of EVODiff (FID 2.25). Furthermore, even as the number of steps increases to 20 where baseline solvers typically saturate, DiFA continues to yield gains (FID 1.96 vs. 2.33). This indicates that orthogonal deviation guidance helps preserve residual details that may otherwise be attenuated by consensus-based stabilization, allowing the model to break the FID 2.0 barrier without extra training.   

\noindent \textbf{Universal Consistency on ImageNet.}
The bottom row of Figure~\ref{fig:main_results} and Table~\ref{tab:imagenet_sota}  highlight the robustness of DiFA on the more complex ImageNet manifold. The proposed framework shows broad solver compatibility: DiFA consistently lowers the FID curve when coupled with either the predictor-corrector-based UniPC or the high-order DPM-Solver++.   Notably, DiFA substantially improves the unstable low-step behavior of the Heun solver, reducing FID from 230.05 to 110.20 at NFE 5. This suggests that the proposed prediction-alignment mechanism can mitigate severe trajectory drift in challenging low-NFE regimes.  Moreover, DiFA proves effective under both logSNR and EDM noise schedules, suggesting that the proposed prediction-alignment principle is not tied to a specific discretization choice. At convergence (NFE 25), DiFA reaches an FID of \textbf{1.63--1.64} on ImageNet-64, showing strong training-free sampling performance among the evaluated solver configurations. This result suggests that refining clean-signal predictions can better exploit the capacity of pretrained diffusion models without additional network evaluations.

\noindent \textbf{Perceptual Quality Enhancement.}
Beyond FID, FD-DINOv2 provides a perceptually aligned evaluation of
image quality. Figure~\ref{fig:dino_results} demonstrates that DiFA yields consistent improvements in FD-DINOv2 scores. Unlike standard averaging methods that might improve FID at the cost of blurriness, DiFA achieves perceptual fidelity through a dual mechanism for structural filtering and texture enhancement, suggesting that the improvement is not obtained merely by sacrificing
perceptual sharpness or semantic coherence.

To further examine whether DiFA is tied to pixel-space diffusion or a specific solver family, we provide further component, robustness, and cross-paradigm studies in
Appendix~\ref{app:ablation}. On LSUN Bedroom, as shown in Table~\ref{tab:ldm_main}, DiFA  
improves a latent-diffusion baseline across 5, 10, and 20 NFE,
demonstrating its effectiveness beyond the main pixel-space setting.
Controlled EDM ablations show that historical consensus provides the
main contribution, while SNR gating and magnitude alignment offer
additional stabilization. Sensitivity analyses of the temporal-window,
SNR-gating, residual-modulation, and logSNR-compatibility parameters
further confirm robustness across different hyperparameter settings.
Finally, experiments on SiT-XL/2 with ImageNet $256\times256$ show
consistent FID and IS improvements under both unguided and CFG-enabled
flow matching settings; qualitative examples are shown in
Figure~\ref{fig:imagenet256sit}.

\section*{Conclusion and Limitations}
We presented DiFA, a training-free inference framework
that reframes clean-signal prediction refinement as a sequential
state-estimation problem. Rather than treating each denoising prediction as an exact estimate, DiFA exploits the common-anchor geometry and noise-level-dependent reliability structure induced by the forward process. It constructs a causal temporal consensus from historical predictions and applies deviation guidance to preserve informative residual details. Experiments across diffusion and flow-matching settings demonstrate that DiFA consistently improves sampling quality without additional network evaluations, supporting the effectiveness and broader applicability of prediction alignment. The current framework nevertheless relies on clean-prediction parameterizations, fixed hyperparameters, and an idealized static-anchor assumption. Future work will explore adaptive strategies, improved filtering mechanisms, and theoretical extensions for temporally correlated errors, as well as applications to text-to-image, video generation, inverse problems and distillation pipelines.

\section*{Acknowledgements} 
This work was supported in part by grants from National
Natural Science Foundation of China (52539005), the China
Scholarship Council (202306150167), the fundamental research program of Guangdong, China (2023A1515011281),
Guangdong Basic and Applied Basic Research Foundation
(24202107190000687), Foshan Science and Technology
Research Project (2220001018608).

\section*{Impact Statement} 
This work introduces a training-free inference-time paradigm for improving generative-model sampling by refining clean-signal predictions rather than retraining models or replacing numerical solvers. By exploiting temporal prediction redundancy, DiFA can reduce the computational cost of high-quality generation and may make strong diffusion and flow-matching models more accessible under limited computational budgets. At the same time, more efficient generation may also lower the barrier to misuse, including the creation of misleading or harmful synthetic content. Since DiFA does not introduce new training data or modify model weights, these risks are largely inherited from the underlying generative models. We therefore encourage DiFA to be used together with existing safety filters, provenance mechanisms, and responsible deployment practices. 


\nocite{langley00}

\bibliography{example_paper}

\begin{thebibliography}{95}
\providecommand{\natexlab}[1]{#1}
\providecommand{\url}[1]{\texttt{#1}}
\expandafter\ifx\csname urlstyle\endcsname\relax
  \providecommand{\doi}[1]{doi: #1}\else
  \providecommand{\doi}{doi: \begingroup \urlstyle{rm}\Url}\fi

\bibitem[Abuduweili et~al.(2025)Abuduweili, Yuan, Liu, and
  Permenter]{abuduweili2025enhancing}
Abuduweili, A., Yuan, C., Liu, C., and Permenter, F.
\newblock Enhancing sample generation of diffusion models using noise level
  correction.
\newblock \emph{Transactions on Machine Learning Research}, 2025.
\newblock ISSN 2835-8856.
\newblock URL \url{https://openreview.net/forum?id=y8VXikiIU0}.

\bibitem[Ahn et~al.(2024)Ahn, Cho, Min, Jang, Kim, Kim, Park, Jin, and
  Kim]{ahn2024self}
Ahn, D., Cho, H., Min, J., Jang, W., Kim, J., Kim, S., Park, H.~H., Jin, K.~H.,
  and Kim, S.
\newblock Self-rectifying diffusion sampling with perturbed-attention guidance.
\newblock In \emph{European Conference on Computer Vision}, pp.\  1--17.
  Springer, 2024.

\bibitem[Bai et~al.(2025)Bai, Shao, Zhou, Qi, Xu, Xiong, and
  Xie]{bai2025zigzag}
Bai, L., Shao, S., Zhou, Z., Qi, Z., Xu, Z., Xiong, H., and Xie, Z.
\newblock Zigzag diffusion sampling: Diffusion models can self-improve via
  self-reflection.
\newblock In \emph{The Thirteenth International Conference on Learning
  Representations}, 2025.
\newblock URL \url{https://openreview.net/forum?id=MKvQH1ekeY}.

\bibitem[Bansal et~al.(2024)Bansal, Chu, Schwarzschild, Sengupta, Goldblum,
  Geiping, and Goldstein]{bansal2024universal}
Bansal, A., Chu, H.-M., Schwarzschild, A., Sengupta, R., Goldblum, M., Geiping,
  J., and Goldstein, T.
\newblock Universal guidance for diffusion models.
\newblock In \emph{The Twelfth International Conference on Learning
  Representations}, 2024.
\newblock URL \url{https://openreview.net/forum?id=pzpWBbnwiJ}.

\bibitem[Bao et~al.(2022)Bao, Li, Zhu, and Zhang]{bao2022analyticdpm}
Bao, F., Li, C., Zhu, J., and Zhang, B.
\newblock Analytic-{DPM}: an analytic estimate of the optimal reverse variance
  in diffusion probabilistic models.
\newblock In \emph{International Conference on Learning Representations}, 2022.
\newblock URL \url{https://openreview.net/forum?id=0xiJLKH-ufZ}.

\bibitem[Batifol et~al.(2025)Batifol, Blattmann, Boesel, Consul, Diagne,
  Dockhorn, English, English, Esser, Kulal, et~al.]{batifol2025flux}
Batifol, S., Blattmann, A., Boesel, F., Consul, S., Diagne, C., Dockhorn, T.,
  English, J., English, Z., Esser, P., Kulal, S., et~al.
\newblock Flux. 1 kontext: Flow matching for in-context image generation and
  editing in latent space.
\newblock \emph{arXiv e-prints}, pp.\  arXiv--2506, 2025.

\bibitem[Blattmann et~al.(2023)Blattmann, Rombach, Ling, Dockhorn, Kim, Fidler,
  and Kreis]{blattmann2023align}
Blattmann, A., Rombach, R., Ling, H., Dockhorn, T., Kim, S.~W., Fidler, S., and
  Kreis, K.
\newblock Align your latents: High-resolution video synthesis with latent
  diffusion models.
\newblock In \emph{Proceedings of the IEEE/CVF conference on computer vision
  and pattern recognition}, pp.\  22563--22575, 2023.

\bibitem[Chang et~al.(2026)Chang, Koulieris, Chang, and Shum]{chang2026design}
Chang, Z., Koulieris, G.~A., Chang, H.~J., and Shum, H.~P.
\newblock On the design fundamentals of diffusion models: A survey.
\newblock \emph{Pattern Recognition}, 169:\penalty0 111934, 2026.

\bibitem[Chen et~al.(2025)Chen, Li, Li, Yang, Paisley, and
  Zeng]{chen2025dequantified}
Chen, W., Li, S., Li, J., Yang, J., Paisley, J., and Zeng, D.
\newblock Dequantified diffusion-schr\"odinger bridge for density ratio
  estimation.
\newblock In \emph{Forty-second International Conference on Machine Learning},
  2025.
\newblock URL \url{https://openreview.net/forum?id=zvyHCOcwsw}.

\bibitem[Chen et~al.(2026)Chen, Du, Li, Zeng, and Paisley]{CHEN2026112442}
Chen, W., Du, S., Li, S., Zeng, D., and Paisley, J.
\newblock Entropy-informed weighting channel normalizing flow for deep
  generative models.
\newblock \emph{Pattern Recognition}, 172:\penalty0 112442, 2026.
\newblock ISSN 0031-3203.
\newblock \doi{https://doi.org/10.1016/j.patcog.2025.112442}.

\bibitem[Chen et~al.(2024)Chen, Li, Wang, Zhang, Xu, Jiang, Song, and
  Wang]{chen2024rethinking}
Chen, Z., Li, H., Wang, F., Zhang, O., Xu, H., Jiang, X., Song, Z., and Wang,
  H.
\newblock Rethinking the diffusion models for missing data imputation: A
  gradient flow perspective.
\newblock \emph{Advances in Neural Information Processing Systems},
  37:\penalty0 112050--112103, 2024.

\bibitem[Chung et~al.(2022)Chung, Sim, Ryu, and Ye]{chung2022improving}
Chung, H., Sim, B., Ryu, D., and Ye, J.~C.
\newblock Improving diffusion models for inverse problems using manifold
  constraints.
\newblock \emph{Advances in Neural Information Processing Systems},
  35:\penalty0 25683--25696, 2022.

\bibitem[Dhariwal \& Nichol(2021)Dhariwal and Nichol]{dhariwal2021diffusion}
Dhariwal, P. and Nichol, A.
\newblock Diffusion models beat gans on image synthesis.
\newblock In \emph{Advances in Neural Information Processing Systems},
  volume~34, pp.\  8780--8794, 2021.

\bibitem[Dong et~al.(2024)Dong, Liang, Li, Zhang, Cao, Ding, Khan, and
  Shahbaz~Khan]{dong2024continually}
Dong, J., Liang, W., Li, H., Zhang, D., Cao, M., Ding, H., Khan, S.~H., and
  Shahbaz~Khan, F.
\newblock How to continually adapt text-to-image diffusion models for flexible
  customization?
\newblock \emph{Advances in Neural Information Processing Systems},
  37:\penalty0 130057--130083, 2024.

\bibitem[Epstein et~al.(2023)Epstein, Jabri, Poole, Efros, and
  Holynski]{epstein2023diffusion}
Epstein, D., Jabri, A., Poole, B., Efros, A., and Holynski, A.
\newblock Diffusion self-guidance for controllable image generation.
\newblock \emph{Advances in Neural Information Processing Systems},
  36:\penalty0 16222--16239, 2023.

\bibitem[Esser et~al.(2024)Esser, Kulal, Blattmann, Entezari, M{\"u}ller,
  Saini, Levi, Lorenz, Sauer, Boesel, Podell, Dockhorn, English, and
  Rombach]{esser2024scaling}
Esser, P., Kulal, S., Blattmann, A., Entezari, R., M{\"u}ller, J., Saini, H.,
  Levi, Y., Lorenz, D., Sauer, A., Boesel, F., Podell, D., Dockhorn, T.,
  English, Z., and Rombach, R.
\newblock Scaling rectified flow transformers for high-resolution image
  synthesis.
\newblock In \emph{Forty-first International Conference on Machine Learning},
  2024.
\newblock URL \url{https://openreview.net/forum?id=FPnUhsQJ5B}.

\bibitem[Farghly et~al.(2025)Farghly, Potaptchik, Howard, Deligiannidis, and
  Pidstrigach]{farghly2025diffusion}
Farghly, T., Potaptchik, P., Howard, S., Deligiannidis, G., and Pidstrigach, J.
\newblock Diffusion models and the manifold hypothesis: Log-domain smoothing is
  geometry adaptive.
\newblock In \emph{The Thirty-ninth Annual Conference on Neural Information
  Processing Systems}, 2025.
\newblock URL \url{https://openreview.net/forum?id=4JihzQXNJn}.

\bibitem[Frans et~al.(2025)Frans, Hafner, Levine, and Abbeel]{frans2025one}
Frans, K., Hafner, D., Levine, S., and Abbeel, P.
\newblock One step diffusion via shortcut models.
\newblock In \emph{The Thirteenth International Conference on Learning
  Representations}, 2025.
\newblock URL \url{https://openreview.net/forum?id=OlzB6LnXcS}.

\bibitem[Fu et~al.(2025)Fu, Wang, Chen, Shen, and Tao]{fu2025selfverification}
Fu, S., Wang, Y., Chen, Y., Shen, L., and Tao, D.
\newblock Self-verification provably prevents model collapse in recursive
  synthetic training.
\newblock In Belgrave, D., Zhang, C., Lin, H., Pascanu, R., Koniusz, P.,
  Ghassemi, M., and Chen, N. (eds.), \emph{Advances in Neural Information
  Processing Systems}, volume~38, pp.\  36101--36154. Curran Associates, Inc.,
  2025.
\newblock URL \url{https://openreview.net/forum?id=X5Hk8aMs6w}.

\bibitem[Geng et~al.(2025)Geng, Deng, Bai, Kolter, and He]{geng2025mean}
Geng, Z., Deng, M., Bai, X., Kolter, J.~Z., and He, K.
\newblock Mean flows for one-step generative modeling.
\newblock In \emph{The Thirty-ninth Annual Conference on Neural Information
  Processing Systems}, 2025.
\newblock URL \url{https://openreview.net/forum?id=uWj4s7rMnR}.

\bibitem[Gonzalez et~al.(2023)Gonzalez, Fernandez~Pinto, Tran, Hajri, Masmoudi,
  et~al.]{gonzalez2023seeds}
Gonzalez, M., Fernandez~Pinto, N., Tran, T., Hajri, H., Masmoudi, N., et~al.
\newblock Seeds: Exponential sde solvers for fast high-quality sampling from
  diffusion models.
\newblock \emph{Advances in Neural Information Processing Systems},
  36:\penalty0 68061--68120, 2023.

\bibitem[Goodfellow et~al.(2014)Goodfellow, Pouget-Abadie, Mirza, Xu,
  Warde-Farley, Ozair, Courville, and Bengio]{NIPS2014_5ca3e9b1}
Goodfellow, I.~J., Pouget-Abadie, J., Mirza, M., Xu, B., Warde-Farley, D.,
  Ozair, S., Courville, A., and Bengio, Y.
\newblock Generative adversarial networks, 2014.
\newblock URL \url{https://arxiv.org/abs/1406.2661}.

\bibitem[Heusel et~al.(2017)Heusel, Ramsauer, Unterthiner, Nessler, and
  Hochreiter]{Heusel2017gans}
Heusel, M., Ramsauer, H., Unterthiner, T., Nessler, B., and Hochreiter, S.
\newblock Gans trained by a two time-scale update rule converge to a local nash
  equilibrium.
\newblock \emph{Advances in neural information processing systems}, 30, 2017.

\bibitem[Ho \& Salimans(2022)Ho and Salimans]{ho2021classifier}
Ho, J. and Salimans, T.
\newblock Classifier-free diffusion guidance, 2022.
\newblock URL \url{https://arxiv.org/abs/2207.12598}.

\bibitem[Ho et~al.(2020)Ho, Jain, and Abbeel]{ho2020denoising}
Ho, J., Jain, A., and Abbeel, P.
\newblock Denoising diffusion probabilistic models.
\newblock In \emph{Advances in Neural Information Processing Systems},
  volume~33, pp.\  6840--6851, 2020.

\bibitem[Hong et~al.(2023)Hong, Lee, Jang, and Kim]{hong2023improving}
Hong, S., Lee, G., Jang, W., and Kim, S.
\newblock Improving sample quality of diffusion models using self-attention
  guidance.
\newblock In \emph{Proceedings of the IEEE/CVF International Conference on
  Computer Vision}, pp.\  7462--7471, 2023.

\bibitem[Huang et~al.(2025)Huang, Gokaslan, Kuleshov, and
  Tompkin]{huang2025r3gan}
Huang, Y., Gokaslan, A., Kuleshov, V., and Tompkin, J.
\newblock The gan is dead; long live the gan! a modern gan baseline.
\newblock \emph{arXiv preprint arXiv:2501.05441}, 2025.

\bibitem[Jabri et~al.(2022)Jabri, Fleet, and Chen]{jabri2022scalable}
Jabri, A., Fleet, D., and Chen, T.
\newblock Scalable adaptive computation for iterative generation.
\newblock \emph{arXiv preprint arXiv:2212.11972}, 2022.

\bibitem[Jia et~al.(2025)Jia, Liu, Song, Yuan, Shen, and
  Wang]{jia2025antithetic}
Jia, J., Liu, S., Song, B., Yuan, W., Shen, L., and Wang, G.
\newblock Antithetic noise in diffusion models.
\newblock \emph{arXiv preprint arXiv:2506.06185}, 2025.

\bibitem[Jiang et~al.(2025)Jiang, Wang, Ye, Shao, Sun, Zhang, Chen, Zhang,
  Chen, and Li]{jiang2025sada}
Jiang, T., Wang, Y., Ye, H., Shao, Z., Sun, J., Zhang, J., Chen, Z., Zhang, J.,
  Chen, Y., and Li, H.
\newblock {SADA}: Stability-guided adaptive diffusion acceleration.
\newblock In \emph{Forty-second International Conference on Machine Learning},
  2025.
\newblock URL \url{https://openreview.net/forum?id=ThMQfsBnje}.

\bibitem[Karras et~al.(2020)Karras, Aittala, Hellsten, Laine, Lehtinen, and
  Aila]{karras2020training}
Karras, T., Aittala, M., Hellsten, J., Laine, S., Lehtinen, J., and Aila, T.
\newblock Training generative adversarial networks with limited data.
\newblock In \emph{Advances in Neural Information Processing Systems},
  volume~33, pp.\  12104--12114, 2020.

\bibitem[Karras et~al.(2022)Karras, Aittala, Aila, and
  Laine]{karras2022elucidating}
Karras, T., Aittala, M., Aila, T., and Laine, S.
\newblock Elucidating the design space of diffusion-based generative models.
\newblock \emph{Advances in neural information processing systems},
  35:\penalty0 26565--26577, 2022.

\bibitem[Karras et~al.(2024{\natexlab{a}})Karras, Aittala,
  Kynk{\"a}{\"a}nniemi, Lehtinen, Aila, and Laine]{karras2024guiding}
Karras, T., Aittala, M., Kynk{\"a}{\"a}nniemi, T., Lehtinen, J., Aila, T., and
  Laine, S.
\newblock Guiding a diffusion model with a bad version of itself.
\newblock \emph{Advances in Neural Information Processing Systems},
  37:\penalty0 52996--53021, 2024{\natexlab{a}}.

\bibitem[Karras et~al.(2024{\natexlab{b}})Karras, Aittala, Lehtinen, Hellsten,
  Aila, and Laine]{karras2024analyzing}
Karras, T., Aittala, M., Lehtinen, J., Hellsten, J., Aila, T., and Laine, S.
\newblock Analyzing and improving the training dynamics of diffusion models.
\newblock In \emph{Proceedings of the IEEE/CVF Conference on Computer Vision
  and Pattern Recognition}, pp.\  24174--24184, 2024{\natexlab{b}}.

\bibitem[Kim et~al.(2024)Kim, Lai, Liao, Murata, Takida, Uesaka, He, Mitsufuji,
  and Ermon]{kim2023consistency}
Kim, D., Lai, C.-H., Liao, W.-H., Murata, N., Takida, Y., Uesaka, T., He, Y.,
  Mitsufuji, Y., and Ermon, S.
\newblock Consistency trajectory models: Learning probability flow ode
  trajectory of diffusion.
\newblock In \emph{The Twelfth International Conference on Learning
  Representations}, 2024.

\bibitem[Kingma et~al.(2021)Kingma, Salimans, Poole, and
  Ho]{kingma2021variational}
Kingma, D., Salimans, T., Poole, B., and Ho, J.
\newblock Variational diffusion models.
\newblock \emph{Advances in neural information processing systems},
  34:\penalty0 21696--21707, 2021.

\bibitem[Kingma(2013)]{kingma2013auto}
Kingma, D.~P.
\newblock Auto-encoding variational bayes.
\newblock \emph{arXiv preprint arXiv:1312.6114}, 2013.

\bibitem[Kingma \& Gao(2023)Kingma and Gao]{kingma2023understanding}
Kingma, D.~P. and Gao, R.
\newblock Understanding diffusion objectives as the {ELBO} with simple data
  augmentation.
\newblock In \emph{Thirty-seventh Conference on Neural Information Processing
  Systems}, 2023.
\newblock URL \url{https://openreview.net/forum?id=NnMEadcdyD}.

\bibitem[Langley(2000)]{langley00}
Langley, P.
\newblock Crafting papers on machine learning.
\newblock In Langley, P. (ed.), \emph{Proceedings of the 17th International
  Conference on Machine Learning (ICML 2000)}, pp.\  1207--1216, Stanford, CA,
  2000. Morgan Kaufmann.

\bibitem[Li \& Zeng(2026)Li and
  Zeng]{li2026mitigatingcontractivitytrapdiffusion}
Li, S. and Zeng, D.
\newblock Mitigating the contractivity trap in diffusion odes via stein
  stabilization, 2026.
\newblock URL \url{https://arxiv.org/abs/2606.07835}.

\bibitem[Li et~al.(2023)Li, Chen, and Zeng]{li2023scire}
Li, S., Chen, W., and Zeng, D.
\newblock Scire-solver: Accelerating diffusion models sampling by
  score-integrand solver with recursive difference.
\newblock \emph{arXiv preprint arXiv:2308.07896}, 2023.

\bibitem[Li et~al.(2025)Li, Chen, and Zeng]{li2025evodiff}
Li, S., Chen, W., and Zeng, D.
\newblock {EVOD}iff: Entropy-aware variance optimized diffusion inference.
\newblock In \emph{Advances in Neural Information Processing Systems},
  volume~38, pp.\  148134--148181, 2025.
\newblock URL \url{https://openreview.net/forum?id=rKASv92Myl}.

\bibitem[Li et~al.(2024)Li, Zhou, Yu, Song, and Yang]{li2024coupled}
Li, W., Zhou, H., Yu, J., Song, Z., and Yang, W.
\newblock Coupled mamba: Enhanced multimodal fusion with coupled state space
  model.
\newblock In \emph{The Thirty-eighth Annual Conference on Neural Information
  Processing Systems}, 2024.
\newblock URL \url{https://openreview.net/forum?id=UXEo3uNNIX}.

\bibitem[Li et~al.(2026)Li, Song, Zhou, Yu, Zhang, and Yang]{li2026loramixer}
Li, W., Song, Z., Zhou, H., Yu, J., Zhang, Y., and Yang, W.
\newblock Lo{RA}-mixer: Coordinate modular lo{RA} experts through serial
  attention routing.
\newblock In \emph{The Fourteenth International Conference on Learning
  Representations}, 2026.
\newblock URL \url{https://openreview.net/forum?id=GMP1S4R6Ke}.

\bibitem[Lin et~al.(2024)Lin, Liu, Li, and Yang]{lin2024common}
Lin, S., Liu, B., Li, J., and Yang, X.
\newblock Common diffusion noise schedules and sample steps are flawed.
\newblock In \emph{Proceedings of the IEEE/CVF winter conference on
  applications of computer vision}, pp.\  5404--5411, 2024.

\bibitem[Lipman et~al.(2023)Lipman, Chen, Ben-Hamu, Nickel, and
  Le]{lipman2023flow}
Lipman, Y., Chen, R. T.~Q., Ben-Hamu, H., Nickel, M., and Le, M.
\newblock Flow matching for generative modeling.
\newblock In \emph{The Eleventh International Conference on Learning
  Representations}, 2023.
\newblock URL \url{https://openreview.net/forum?id=PqvMRDCJT9t}.

\bibitem[Liu et~al.(2022)Liu, Ren, Lin, and Zhao]{liu2022pseudo}
Liu, L., Ren, Y., Lin, Z., and Zhao, Z.
\newblock Pseudo numerical methods for diffusion models on manifolds.
\newblock In \emph{International Conference on Learning Representations}, 2022.
\newblock URL \url{https://openreview.net/forum?id=PlKWVd2yBkY}.

\bibitem[Liu et~al.(2023{\natexlab{a}})Liu, Gong, and qiang liu]{liu2023flow}
Liu, X., Gong, C., and qiang liu.
\newblock Flow straight and fast: Learning to generate and transfer data with
  rectified flow.
\newblock In \emph{The Eleventh International Conference on Learning
  Representations}, 2023{\natexlab{a}}.
\newblock URL \url{https://openreview.net/forum?id=XVjTT1nw5z}.

\bibitem[Liu et~al.(2023{\natexlab{b}})Liu, Park, Azadi, Zhang, Chopikyan, Hu,
  Shi, Rohrbach, and Darrell]{liu2023more}
Liu, X., Park, D.~H., Azadi, S., Zhang, G., Chopikyan, A., Hu, Y., Shi, H.,
  Rohrbach, A., and Darrell, T.
\newblock More control for free! image synthesis with semantic diffusion
  guidance.
\newblock In \emph{Proceedings of the IEEE/CVF winter conference on
  applications of computer vision}, pp.\  289--299, 2023{\natexlab{b}}.

\bibitem[Liu et~al.(2024)Liu, Zhang, Ma, Peng, and qiang liu]{liu2024instaflow}
Liu, X., Zhang, X., Ma, J., Peng, J., and qiang liu.
\newblock Instaflow: One step is enough for high-quality diffusion-based
  text-to-image generation.
\newblock In \emph{The Twelfth International Conference on Learning
  Representations}, 2024.
\newblock URL \url{https://openreview.net/forum?id=1k4yZbbDqX}.

\bibitem[Lu et~al.(2022)Lu, Zhou, Bao, Chen, Li, and Zhu]{lu2022dpm}
Lu, C., Zhou, Y., Bao, F., Chen, J., Li, C., and Zhu, J.
\newblock Dpm-solver: A fast ode solver for diffusion probabilistic model
  sampling in around 10 steps.
\newblock \emph{Advances in Neural Information Processing Systems},
  35:\penalty0 5775--5787, 2022.

\bibitem[Lu et~al.(2025)Lu, Zhou, Bao, Chen, Li, and Zhu]{lu2025dpm}
Lu, C., Zhou, Y., Bao, F., Chen, J., Li, C., and Zhu, J.
\newblock Dpm-solver++: Fast solver for guided sampling of diffusion
  probabilistic models.
\newblock \emph{Machine Intelligence Research}, pp.\  1--22, 2025.

\bibitem[Luo et~al.(2023)Luo, Tan, Huang, Li, and Zhao]{luo2023latent}
Luo, S., Tan, Y., Huang, L., Li, J., and Zhao, H.
\newblock Latent consistency models: Synthesizing high-resolution images with
  few-step inference.
\newblock \emph{arXiv preprint arXiv:2310.04378}, 2023.

\bibitem[Luo et~al.(2024)Luo, Huang, Geng, Kolter, and Qi]{luo2024one}
Luo, W., Huang, Z., Geng, Z., Kolter, J.~Z., and Qi, G.-j.
\newblock One-step diffusion distillation through score implicit matching.
\newblock \emph{Advances in Neural Information Processing Systems},
  37:\penalty0 115377--115408, 2024.

\bibitem[Ma et~al.(2024{\natexlab{a}})Ma, Goldstein, Albergo, Boffi,
  Vanden-Eijnden, and Xie]{ma2024sit}
Ma, N., Goldstein, M., Albergo, M.~S., Boffi, N.~M., Vanden-Eijnden, E., and
  Xie, S.
\newblock Sit: Exploring flow and diffusion-based generative models with
  scalable interpolant transformers.
\newblock In \emph{European Conference on Computer Vision}, pp.\  23--40.
  Springer, 2024{\natexlab{a}}.

\bibitem[Ma et~al.(2025)Ma, Tong, Jia, Hu, Su, Zhang, Yang, Li, Jaakkola, Jia,
  et~al.]{ma2025inference}
Ma, N., Tong, S., Jia, H., Hu, H., Su, Y.-C., Zhang, M., Yang, X., Li, Y.,
  Jaakkola, T., Jia, X., et~al.
\newblock Inference-time scaling for diffusion models beyond scaling denoising
  steps.
\newblock \emph{arXiv preprint arXiv:2501.09732}, 2025.

\bibitem[Ma et~al.(2024{\natexlab{b}})Ma, Fang, and Wang]{ma2024deepcache}
Ma, X., Fang, G., and Wang, X.
\newblock Deepcache: Accelerating diffusion models for free.
\newblock In \emph{Proceedings of the IEEE/CVF conference on computer vision
  and pattern recognition}, pp.\  15762--15772, 2024{\natexlab{b}}.

\bibitem[Meng et~al.(2023)Meng, Rombach, Gao, Kingma, Ermon, Ho, and
  Salimans]{meng2023distillation}
Meng, C., Rombach, R., Gao, R., Kingma, D., Ermon, S., Ho, J., and Salimans, T.
\newblock On distillation of guided diffusion models.
\newblock In \emph{Proceedings of the IEEE/CVF Conference on Computer Vision
  and Pattern Recognition (CVPR)}, pp.\  14297--14306, June 2023.

\bibitem[Nichol \& Dhariwal(2021)Nichol and Dhariwal]{nichol2021improved}
Nichol, A.~Q. and Dhariwal, P.
\newblock Improved denoising diffusion probabilistic models.
\newblock In \emph{International Conference on Machine Learning}, pp.\
  8162--8171. PMLR, 2021.

\bibitem[Ning et~al.(2024)Ning, Li, Su, Salah, and
  Ertugrul]{ning2024elucidating}
Ning, M., Li, M., Su, J., Salah, A.~A., and Ertugrul, I.~O.
\newblock Elucidating the exposure bias in diffusion models.
\newblock In \emph{The Twelfth International Conference on Learning
  Representations}, 2024.
\newblock URL \url{https://openreview.net/forum?id=xEJMoj1SpX}.

\bibitem[Park et~al.(2025)Park, Jung, Bae, and Yun]{park2025temporal}
Park, Y., Jung, H., Bae, S., and Yun, S.-Y.
\newblock Temporal alignment guidance: On-manifold sampling in diffusion
  models.
\newblock In \emph{NeurIPS 2025 Workshop on Structured Probabilistic Inference
  {\&} Generative Modeling}, 2025.
\newblock URL \url{https://openreview.net/forum?id=TubfvqZNEr}.

\bibitem[Podell et~al.(2024)Podell, English, Lacey, Blattmann, Dockhorn,
  M{\"u}ller, Penna, and Rombach]{podell2024sdxl}
Podell, D., English, Z., Lacey, K., Blattmann, A., Dockhorn, T., M{\"u}ller,
  J., Penna, J., and Rombach, R.
\newblock {SDXL}: Improving latent diffusion models for high-resolution image
  synthesis.
\newblock In \emph{The Twelfth International Conference on Learning
  Representations}, 2024.
\newblock URL \url{https://openreview.net/forum?id=di52zR8xgf}.

\bibitem[Ren et~al.(2019)Ren, Zhao, and Ermon]{ren2019adaptive}
Ren, H., Zhao, S., and Ermon, S.
\newblock Adaptive antithetic sampling for variance reduction.
\newblock In \emph{International Conference on Machine Learning}, pp.\
  5420--5428. PMLR, 2019.

\bibitem[Rombach et~al.(2022)Rombach, Blattmann, Lorenz, Esser, and
  Ommer]{rombach2022high}
Rombach, R., Blattmann, A., Lorenz, D., Esser, P., and Ommer, B.
\newblock High-resolution image synthesis with latent diffusion models.
\newblock In \emph{Proceedings of the IEEE/CVF conference on computer vision
  and pattern recognition}, pp.\  10684--10695, 2022.

\bibitem[Sabour et~al.(2024)Sabour, Fidler, and Kreis]{sabour2024align}
Sabour, A., Fidler, S., and Kreis, K.
\newblock Align your steps: Optimizing sampling schedules in diffusion models.
\newblock In \emph{Forty-first International Conference on Machine Learning},
  2024.
\newblock URL \url{https://openreview.net/forum?id=nBGBzV4It3}.

\bibitem[Sabour et~al.(2025)Sabour, Fidler, and Kreis]{sabour2025align}
Sabour, A., Fidler, S., and Kreis, K.
\newblock Align your flow: Scaling continuous-time flow map distillation.
\newblock In \emph{The Thirty-ninth Annual Conference on Neural Information
  Processing Systems}, 2025.
\newblock URL \url{https://openreview.net/forum?id=pzHuesCvcO}.

\bibitem[Salimans \& Ho(2022)Salimans and Ho]{salimans2022progressive}
Salimans, T. and Ho, J.
\newblock Progressive distillation for fast sampling of diffusion models.
\newblock In \emph{International Conference on Learning Representations}, 2022.
\newblock URL \url{https://openreview.net/forum?id=TIdIXIpzhoI}.

\bibitem[Salimans et~al.(2016)Salimans, Goodfellow, Zaremba, Cheung, Radford,
  and Chen]{salimans2016improved}
Salimans, T., Goodfellow, I., Zaremba, W., Cheung, V., Radford, A., and Chen,
  X.
\newblock Improved techniques for training gans.
\newblock \emph{Advances in neural information processing systems}, 29, 2016.

\bibitem[Sauer et~al.(2022)Sauer, Schwarz, and Geiger]{sauer2022stylegan}
Sauer, A., Schwarz, K., and Geiger, A.
\newblock Stylegan-xl: Scaling stylegan to large diverse datasets.
\newblock In \emph{ACM SIGGRAPH 2022 conference proceedings}, pp.\  1--10,
  2022.

\bibitem[Sauer et~al.(2024)Sauer, Lorenz, Blattmann, and
  Rombach]{sauer2024adversarial}
Sauer, A., Lorenz, D., Blattmann, A., and Rombach, R.
\newblock Adversarial diffusion distillation.
\newblock In \emph{European Conference on Computer Vision}, pp.\  87--103.
  Springer, 2024.

\bibitem[Shen et~al.(2024)Shen, Song, Xue, Wang, and Liu]{Shen_2024_CVPR}
Shen, D., Song, G., Xue, Z., Wang, F.-Y., and Liu, Y.
\newblock Rethinking the spatial inconsistency in classifier-free diffusion
  guidance.
\newblock In \emph{Proceedings of the IEEE/CVF Conference on Computer Vision
  and Pattern Recognition (CVPR)}, pp.\  9370--9379, June 2024.

\bibitem[Shen et~al.(2025)Shen, GAN, and Ling]{shen2025information}
Shen, Y., GAN, L., and Ling, C.
\newblock Information theoretic learning for diffusion models with warm start.
\newblock In \emph{The Thirty-ninth Annual Conference on Neural Information
  Processing Systems}, 2025.
\newblock URL \url{https://openreview.net/forum?id=3IbKbmNci3}.

\bibitem[Shih et~al.(2023)Shih, Belkhale, Ermon, Sadigh, and
  Anari]{shih2023parallel}
Shih, A., Belkhale, S., Ermon, S., Sadigh, D., and Anari, N.
\newblock Parallel sampling of diffusion models.
\newblock \emph{Advances in Neural Information Processing Systems},
  36:\penalty0 4263--4276, 2023.

\bibitem[Sohl-Dickstein et~al.(2015)Sohl-Dickstein, Weiss, Maheswaranathan, and
  Ganguli]{sohl2015deep}
Sohl-Dickstein, J., Weiss, E., Maheswaranathan, N., and Ganguli, S.
\newblock Deep unsupervised learning using nonequilibrium thermodynamics.
\newblock In \emph{International conference on machine learning}, pp.\
  2256--2265. PMLR, 2015.

\bibitem[Song et~al.(2021{\natexlab{a}})Song, Meng, and
  Ermon]{song2021denoising}
Song, J., Meng, C., and Ermon, S.
\newblock Denoising diffusion implicit models.
\newblock In \emph{International Conference on Learning Representations},
  2021{\natexlab{a}}.
\newblock URL \url{https://openreview.net/forum?id=St1giarCHLP}.

\bibitem[Song et~al.(2021{\natexlab{b}})Song, Sohl-Dickstein, Kingma, Kumar,
  Ermon, and Poole]{song2020score}
Song, Y., Sohl-Dickstein, J., Kingma, D.~P., Kumar, A., Ermon, S., and Poole,
  B.
\newblock Score-based generative modeling through stochastic differential
  equations.
\newblock In \emph{International Conference on Learning Representations},
  2021{\natexlab{b}}.

\bibitem[Song et~al.(2021{\natexlab{c}})Song, Sohl-Dickstein, Kingma, Kumar,
  Ermon, and Poole]{song2021score}
Song, Y., Sohl-Dickstein, J., Kingma, D.~P., Kumar, A., Ermon, S., and Poole,
  B.
\newblock Score-based generative modeling through stochastic differential
  equations.
\newblock In \emph{International Conference on Learning Representations},
  2021{\natexlab{c}}.
\newblock URL \url{https://openreview.net/forum?id=PxTIG12RRHS}.

\bibitem[Song et~al.(2023)Song, Dhariwal, Chen, and
  Sutskever]{song2023consistency}
Song, Y., Dhariwal, P., Chen, M., and Sutskever, I.
\newblock Consistency models.
\newblock In \emph{International Conference on Machine Learning}, pp.\
  32211--32252. PMLR, 2023.

\bibitem[Stein et~al.(2023)Stein, Cresswell, Hosseinzadeh, Sui, Ross,
  Villecroze, Liu, Caterini, Taylor, and Loaiza-Ganem]{stein2023exposing}
Stein, G., Cresswell, J., Hosseinzadeh, R., Sui, Y., Ross, B., Villecroze, V.,
  Liu, Z., Caterini, A.~L., Taylor, E., and Loaiza-Ganem, G.
\newblock Exposing flaws of generative model evaluation metrics and their
  unfair treatment of diffusion models.
\newblock \emph{Advances in Neural Information Processing Systems},
  36:\penalty0 3732--3784, 2023.

\bibitem[Tang et~al.(2025)Tang, Peng, Tang, Hong, Wang, and
  Chang]{tang2025inferencetime}
Tang, Z., Peng, J., Tang, J., Hong, M., Wang, F., and Chang, T.-H.
\newblock Inference-time alignment of diffusion models with direct noise
  optimization.
\newblock In \emph{Forty-second International Conference on Machine Learning},
  2025.
\newblock URL \url{https://openreview.net/forum?id=JpbqiD7n9r}.

\bibitem[Tong et~al.(2025)Tong, Hoang, Liu, den Broeck, and
  Niepert]{tong2025learning}
Tong, V., Hoang, D.~T., Liu, A., den Broeck, G.~V., and Niepert, M.
\newblock Learning to discretize denoising diffusion {ODE}s.
\newblock In \emph{The Thirteenth International Conference on Learning
  Representations}, 2025.
\newblock URL \url{https://openreview.net/forum?id=xDrFWUmCne}.

\bibitem[Vahdat et~al.(2021)Vahdat, Kreis, and Kautz]{vahdat2021score}
Vahdat, A., Kreis, K., and Kautz, J.
\newblock Score-based generative modeling in latent space.
\newblock In \emph{Advances in Neural Information Processing Systems},
  volume~34, pp.\  11287--11302, 2021.

\bibitem[Wang et~al.(2025)Wang, Yang, Huang, Wang, and Li]{wang2025rectified}
Wang, F.-Y., Yang, L., Huang, Z., Wang, M., and Li, H.
\newblock Rectified diffusion: Straightness is not your need in rectified flow.
\newblock In \emph{The Thirteenth International Conference on Learning
  Representations}, 2025.
\newblock URL \url{https://openreview.net/forum?id=nEDToD1R8M}.

\bibitem[Wimbauer et~al.(2024)Wimbauer, Wu, Schoenfeld, Dai, Hou, He,
  Sanakoyeu, Zhang, Tsai, Kohler, et~al.]{wimbauer2024cache}
Wimbauer, F., Wu, B., Schoenfeld, E., Dai, X., Hou, J., He, Z., Sanakoyeu, A.,
  Zhang, P., Tsai, S., Kohler, J., et~al.
\newblock Cache me if you can: Accelerating diffusion models through block
  caching.
\newblock In \emph{Proceedings of the IEEE/CVF Conference on Computer Vision
  and Pattern Recognition}, pp.\  6211--6220, 2024.

\bibitem[Wu et~al.(2023)Wu, Zhou, Kawaguchi, and Zhang]{wu2023fast}
Wu, Z., Zhou, P., Kawaguchi, K., and Zhang, H.
\newblock Fast diffusion model.
\newblock \emph{arXiv preprint arXiv:2306.06991}, 2023.

\bibitem[Xu et~al.(2024{\natexlab{a}})Xu, Zeng, and Paisley]{xu2024sparse}
Xu, J., Zeng, D., and Paisley, J.
\newblock Sparse inducing points in deep gaussian processes: Enhancing modeling
  with denoising diffusion variational inference.
\newblock In \emph{Forty-first International Conference on Machine Learning},
  2024{\natexlab{a}}.
\newblock URL \url{https://openreview.net/forum?id=jTn4AIOgpM}.

\bibitem[Xu et~al.(2024{\natexlab{b}})Xu, Corso, Jaakkola, Vahdat, and
  Kreis]{xu2024disco}
Xu, Y., Corso, G., Jaakkola, T., Vahdat, A., and Kreis, K.
\newblock Disco-diff: Enhancing continuous diffusion models with discrete
  latents.
\newblock \emph{arXiv preprint arXiv:2407.03300}, 2024{\natexlab{b}}.

\bibitem[Yang et~al.(2024)Yang, Ding, Cai, Yu, Wang, and Shi]{yang2024guidance}
Yang, L., Ding, S., Cai, Y., Yu, J., Wang, J., and Shi, Y.
\newblock Guidance with spherical gaussian constraint for conditional
  diffusion.
\newblock In \emph{Forty-first International Conference on Machine Learning},
  2024.
\newblock URL \url{https://openreview.net/forum?id=VtqyurB4Af}.

\bibitem[Ye et~al.(2024)Ye, Lin, Han, Xu, Liu, Liang, Ma, Zou, and
  Ermon]{ye2024tfg}
Ye, H., Lin, H., Han, J., Xu, M., Liu, S., Liang, Y., Ma, J., Zou, J.~Y., and
  Ermon, S.
\newblock Tfg: Unified training-free guidance for diffusion models.
\newblock \emph{Advances in Neural Information Processing Systems},
  37:\penalty0 22370--22417, 2024.

\bibitem[Yin et~al.(2024)Yin, Gharbi, Zhang, Shechtman, Durand, Freeman, and
  Park]{yin2024one}
Yin, T., Gharbi, M., Zhang, R., Shechtman, E., Durand, F., Freeman, W.~T., and
  Park, T.
\newblock One-step diffusion with distribution matching distillation.
\newblock In \emph{Proceedings of the IEEE/CVF conference on computer vision
  and pattern recognition}, pp.\  6613--6623, 2024.

\bibitem[Zhang et~al.(2025)Zhang, Liu, Park, Zhang, and
  Xu]{zhang2025antiexposure}
Zhang, J., Liu, D., Park, E., Zhang, S., and Xu, C.
\newblock Anti-exposure bias in diffusion models.
\newblock In \emph{The Thirteenth International Conference on Learning
  Representations}, 2025.
\newblock URL \url{https://openreview.net/forum?id=MtDd7rWok1}.

\bibitem[Zhang \& Chen(2023)Zhang and Chen]{zhang2023fast}
Zhang, Q. and Chen, Y.
\newblock Fast sampling of diffusion models with exponential integrator.
\newblock In \emph{The Eleventh International Conference on Learning
  Representations}, 2023.
\newblock URL \url{https://openreview.net/forum?id=Loek7hfb46P}.

\bibitem[Zhao et~al.(2023)Zhao, Bai, Rao, Zhou, and Lu]{zhao2023unipc}
Zhao, W., Bai, L., Rao, Y., Zhou, J., and Lu, J.
\newblock Uni{PC}: A unified predictor-corrector framework for fast sampling of
  diffusion models.
\newblock In \emph{Thirty-seventh Conference on Neural Information Processing
  Systems}, 2023.
\newblock URL \url{https://openreview.net/forum?id=hrkmlPhp1u}.

\bibitem[Zheng et~al.(2023)Zheng, Lu, Chen, and Zhu]{zheng2023dpm}
Zheng, K., Lu, C., Chen, J., and Zhu, J.
\newblock Dpm-solver-v3: Improved diffusion ode solver with empirical model
  statistics.
\newblock \emph{Advances in Neural Information Processing Systems},
  36:\penalty0 55502--55542, 2023.

\bibitem[Zhou et~al.(2024)Zhou, Zheng, Gu, Wang, and Huang]{zhou2024sid2a}
Zhou, M., Zheng, H., Gu, Y., Wang, Z., and Huang, H.
\newblock Adversarial score identity distillation: Rapidly surpassing the
  teacher in one step.
\newblock \emph{arXiv preprint arXiv:2410.14919}, 2024.

\end{thebibliography}
\bibliographystyle{icml2026}

\newpage
\appendix
\onecolumn
\begin{center}
\LARGE\textbf{Appendix}
\end{center} 
\section{Proofs}

\subsection{Proof of Proposition~\ref{prop:variance}}
\label{sec:proof_variance}
\begin{proof}
Under the ideal forward-aligned observation model, each
observation is given by
\begin{equation}
\boldsymbol{y}_{i}
=
\boldsymbol{x}_{0,\star}
+
\boldsymbol{\eta}_{i},
\qquad
\operatorname{Cov}(\boldsymbol{\eta}_{i})
=
R_{i}\boldsymbol{I},
\qquad
R_{i}
=
\frac{c}{\operatorname{SNR}(t_{i})}.
\label{eq:appendix_blue_observation}
\end{equation}
We consider a linear unbiased estimator
\begin{equation}
\hat{\boldsymbol{x}}_{0,\operatorname{ideal}}
=
\sum_{i=1}^{n}
w_{i}\boldsymbol{y}_{i},
\qquad
\sum_{i=1}^{n}w_{i}=1.
\label{eq:appendix_blue_linear_estimator}
\end{equation}
Because the observation errors are mutually uncorrelated,
the covariance of the estimator is
\begin{equation}
\operatorname{Cov}
\left(
\hat{\boldsymbol{x}}_{0,\operatorname{ideal}}
\right)
=
\sum_{i=1}^{n}
w_{i}^{2}R_{i}\boldsymbol{I}.
\label{eq:appendix_blue_covariance}
\end{equation}
Minimizing the scalar coefficient
$\sum_{i=1}^{n}w_{i}^{2}R_{i}$ subject to
$\sum_{i=1}^{n}w_{i}=1$ gives the Lagrangian
\begin{equation}
\operatorname{L}
=
\sum_{i=1}^{n}
w_{i}^{2}R_{i}
-
\lambda
\left(
\sum_{i=1}^{n}w_{i}-1
\right).
\label{eq:appendix_blue_lagrangian}
\end{equation}
Taking derivatives with respect to $w_{i}$ yields
\begin{equation}
\frac{\partial \operatorname{L}}{\partial w_{i}}
=
2w_{i}R_{i}
-
\lambda
=
0,
\label{eq:appendix_blue_stationary_condition}
\end{equation}
and therefore
\begin{equation}
w_{i}
=
\frac{R_{i}^{-1}}
{\sum_{j=1}^{n}R_{j}^{-1}}
=
\frac{\operatorname{SNR}(t_{i})}
{\sum_{j=1}^{n}\operatorname{SNR}(t_{j})}.
\label{eq:appendix_blue_optimal_weights}
\end{equation}
Substituting these weights into
Eq.~\eqref{eq:appendix_blue_covariance} gives
\begin{equation}
\operatorname{Cov}
\left(
\hat{\boldsymbol{x}}_{0,\operatorname{ideal}}^{\star}
\right)
=
\left(
\sum_{i=1}^{n}
R_{i}^{-1}
\right)^{-1}
\boldsymbol{I}
=
\frac{c}
{\sum_{i=1}^{n}
\operatorname{SNR}(t_{i})}
\boldsymbol{I}.
\label{eq:appendix_blue_final_covariance}
\end{equation}
Since
$\sum_{i=1}^{n}R_{i}^{-1}>R_{j}^{-1}$
for every $j$ whenever $n\geq 2$, it follows that
\begin{equation}
\operatorname{Cov}
\left(
\hat{\boldsymbol{x}}_{0,\operatorname{ideal}}^{\star}
\right)
\prec
R_{j}\boldsymbol{I},
\qquad
j=1,\ldots,n.
\label{eq:appendix_blue_strict_reduction}
\end{equation}
Thus, the ideal precision-weighted fused estimator has
strictly lower covariance than any individual observation.

This result characterizes the ideal forward-aligned
observation model. In DiFA, practical denoiser predictions
are organized according to the resulting anchor-consistency
and reliability-ordering principles, rather than being
assumed to satisfy the same covariance model exactly. 

The proof is complete.
\end{proof}

\subsection{Proof of Theorem~\ref{thm:kalman_equivalence}}
\label{sec:proof_kalman}
\begin{proof}
For the static anchor model, let
$\hat{\boldsymbol{x}}_{0,i}^{\operatorname{rec}}$
denote the recursive estimate after processing the first
$i$ observations, and let $p_i\boldsymbol{I}$ denote its
estimation covariance. We initialize the recursion using the
first observation:
\begin{equation}
\hat{\boldsymbol{x}}_{0,1}^{\operatorname{rec}} = \boldsymbol{y}_1, \qquad p_1 = R_1.
\end{equation}
For $i=2,\ldots,n$, the static-state Kalman gain is
\begin{equation}
k_i = \frac{ p_{i-1} }{ p_{i-1}+R_i },
\end{equation}
and the recursive update is
\begin{equation}
\hat{\boldsymbol{x}}_{0,i}^{\operatorname{rec}} = \hat{\boldsymbol{x}}_{0,i-1}^{\operatorname{rec}} + k_i \left( \boldsymbol{y}_i - \hat{\boldsymbol{x}}_{0,i-1}^{\operatorname{rec}} \right).
\end{equation}
The corresponding covariance update is
\begin{equation}
p_i = (1-k_i)p_{i-1} = \frac{ p_{i-1}R_i }{ p_{i-1}+R_i },
\end{equation}
which implies
\begin{equation}
p_i^{-1} = p_{i-1}^{-1} + R_i^{-1}.
\end{equation}
Since $p_1=R_1$, repeated application yields
\begin{equation}
p_n^{-1} = \sum_{i=1}^{n} R_i^{-1}.
\label{eq:recursive_precision_sum}
\end{equation}

Next, the recursive mean update can be rewritten as
\begin{equation}
\hat{\boldsymbol{x}}_{0,i}^{\operatorname{rec}} = \frac{ R_i\hat{\boldsymbol{x}}_{0,i-1}^{\operatorname{rec}} + p_{i-1}\boldsymbol{y}_i }{ p_{i-1}+R_i }.
\end{equation}
Combining this relation with the covariance update gives
\begin{equation}
p_i^{-1} \hat{\boldsymbol{x}}_{0,i}^{\operatorname{rec}} = p_{i-1}^{-1} \hat{\boldsymbol{x}}_{0,i-1}^{\operatorname{rec}} + R_i^{-1}\boldsymbol{y}_i.
\end{equation}
Using
\begin{equation}
p_1^{-1} \hat{\boldsymbol{x}}_{0,1}^{\operatorname{rec}} = R_1^{-1}\boldsymbol{y}_1,
\end{equation}
we obtain
\begin{equation}
p_n^{-1} \hat{\boldsymbol{x}}_{0,n}^{\operatorname{rec}} = \sum_{i=1}^{n} R_i^{-1}\boldsymbol{y}_i.
\label{eq:recursive_information_sum}
\end{equation}

Substituting Eq.~\eqref{eq:recursive_precision_sum} into
Eq.~\eqref{eq:recursive_information_sum} yields
\begin{equation}
\hat{\boldsymbol{x}}_{0,n}^{\operatorname{rec}} = \frac{ \sum_{i=1}^{n} R_i^{-1}\boldsymbol{y}_i }{ \sum_{i=1}^{n} R_i^{-1} }.
\end{equation}
Finally, since
\begin{equation}
R_i^{-1} = \frac{ \operatorname{SNR}(t_i) }{ c },
\end{equation}
the common scale factor $c$ cancels out, giving
\begin{equation}
\hat{\boldsymbol{x}}_{0,n}^{\operatorname{rec}} = \frac{ \sum_{i=1}^{n} \operatorname{SNR}(t_i)\boldsymbol{y}_i }{ \sum_{i=1}^{n} \operatorname{SNR}(t_i) } = \hat{\boldsymbol{x}}_{0,\operatorname{ideal}}^{\star}.
\end{equation}
Therefore, under the ideal independent-view observation
model, the full-history precision-weighted anchor estimator
is exactly equivalent to its static-state Kalman recursive
realization.

The proof is complete. 
\end{proof}

\subsection{Lightweight Instantiation Used in Experiments}
\label{sec:implementation_details}

This section specifies the lightweight DiFA instantiation
used in the main pixel-space diffusion experiments. The main
text formulates DiFA through a causal historical reference
anchor and an anchor-relative deviation modulation operator.
Here, we provide the explicit forms of the alignment,
compatibility weighting, and deviation guidance modules, as
well as the default hyperparameter settings used to obtain
the main experimental results. Additional configurations used
for latent-diffusion and flow-matching validation are
specified separately in
Appendix~\ref{app:ablation}.

\paragraph{Causal History Buffer.}
At reverse step $t_i$, the history buffer stores at most $K$ previously processed clean-signal predictions and their corresponding logSNR values:
\begin{equation}
\operatorname{H}_{K}(t_i) = \left\{ \left( \hat{\boldsymbol{x}}_{0}^{(t_j)}, \ell_j \right) \right\}_{t_j\in\operatorname{W}_{K}(t_i)}, \qquad \ell_j = \log \operatorname{SNR}(t_j),
\end{equation}
where $\operatorname{W}_{K}(t_i) \subseteq \{t_j : j>i\}$ contains the most recent historical steps along the reverse trajectory. The instantaneous prediction $\hat{\boldsymbol{x}}_{0}^{(t_i)}$ is used only as a query for constructing the historical reference and is not included among the aggregated reference values.

\paragraph{Historical Prediction Alignment.}
To reduce local scale mismatch between the current prediction and historical candidates, we apply channel-wise affine mean--variance alignment. Let $\mu_c(\boldsymbol{x})$ and $\sigma_c(\boldsymbol{x})$ denote the spatial mean and standard deviation of channel $c$. The aligned historical prediction is defined by
\begin{equation}
\tilde{\boldsymbol{x}}_{0,c}^{(t_j)} = \frac{ \hat{\boldsymbol{x}}_{0,c}^{(t_j)} - \mu_c \left( \hat{\boldsymbol{x}}_{0}^{(t_j)} \right) }{ \sigma_c \left( \hat{\boldsymbol{x}}_{0}^{(t_j)} \right) + \epsilon_{\rm a} } \cdot \sigma_c \left( \hat{\boldsymbol{x}}_{0}^{(t_i)} \right) + \mu_c \left( \hat{\boldsymbol{x}}_{0}^{(t_i)} \right),
\label{eq:appendix_alignment}
\end{equation}
where $\epsilon_{\rm a}>0$ is a numerical stabilizer. This operation matches the channel-wise first- and second-order statistics of each historical prediction to those of the current prediction before reference aggregation.

\paragraph{Structural and Noise-Level Compatibility.}
We compute structural compatibility after lightweight spatial average pooling. Let $\operatorname{P}_{\rm sim}(\cdot)$ denote average pooling with kernel size $k_{\rm sim}\times k_{\rm sim}$. For each historical candidate, the local similarity score is
\begin{equation}
m_{ij} = \frac{ \left\langle \operatorname{vec} \left( \operatorname{P}_{\rm sim} \left( \hat{\boldsymbol{x}}_{0}^{(t_i)} \right) \right), \operatorname{vec} \left( \operatorname{P}_{\rm sim} \left( \tilde{\boldsymbol{x}}_{0}^{(t_j)} \right) \right) \right\rangle }{ \left\| \operatorname{vec} \left( \operatorname{P}_{\rm sim} \left( \hat{\boldsymbol{x}}_{0}^{(t_i)} \right) \right) \right\|_2 \left\| \operatorname{vec} \left( \operatorname{P}_{\rm sim} \left( \tilde{\boldsymbol{x}}_{0}^{(t_j)} \right) \right) \right\|_2 + \epsilon_{\rm s} },
\label{eq:appendix_similarity}
\end{equation}
where $\epsilon_{\rm s}>0$ prevents numerical instability. The similarity scores are normalized within the current history window:
\begin{equation}
\bar{m}_{ij} = \frac{ m_{ij} - \mu_{m,i} }{ \sigma_{m,i} + \epsilon_{\rm n} }, \qquad \mu_{m,i} = \frac{1}{|\operatorname{W}_{K}(t_i)|} \sum_{t_j\in\operatorname{W}_{K}(t_i)} m_{ij},
\label{eq:appendix_similarity_norm}
\end{equation}
where $\sigma_{m,i}$ is the standard deviation of the similarity scores within the current temporal window. The compatibility logit combines structural agreement and logSNR proximity:
\begin{equation}
q_{ij} = \tau \bar{m}_{ij} - \mu(\ell_i) \left| \ell_i-\ell_j \right|,
\label{eq:appendix_compatibility_logit}
\end{equation}
where $\tau$ controls the sharpness of structural selection and $\mu(\ell_i)$ controls the strength of noise-level compatibility at the current reverse step. The normalized historical weights are
\begin{equation}
a_{ij} = \frac{ \exp(q_{ij}) }{ \sum_{t_k\in\operatorname{W}_{K}(t_i)} \exp(q_{ik}) }.
\label{eq:appendix_compatibility_weights}
\end{equation}
The causal historical reference anchor is then
\begin{equation}
\hat{\boldsymbol{x}}_{0}^{\operatorname{cons}}(t_i) = \sum_{t_j\in\operatorname{W}_{K}(t_i)} a_{ij} \tilde{\boldsymbol{x}}_{0}^{(t_j)}.
\label{eq:appendix_consensus}
\end{equation}

\paragraph{Deviation Modulation.}
Given the historical reference anchor, we compute the anchor-relative deviation:
\begin{equation}
\boldsymbol{r}_{t_i} = \hat{\boldsymbol{x}}_{0}^{(t_i)} - \hat{\boldsymbol{x}}_{0}^{\operatorname{cons}}(t_i).
\end{equation}
To reduce residual corrections dominated by magnitude variation parallel to the current prediction, we apply an orthogonal residual projection:
\begin{equation}
\boldsymbol{r}_{t_i}^{\perp} = \boldsymbol{r}_{t_i} - \frac{ \left\langle \boldsymbol{r}_{t_i}, \hat{\boldsymbol{x}}_{0}^{(t_i)} \right\rangle }{ \left\| \hat{\boldsymbol{x}}_{0}^{(t_i)} \right\|_2^2 + \epsilon_{\rm p} } \hat{\boldsymbol{x}}_{0}^{(t_i)},
\label{eq:appendix_projection}
\end{equation}
where $\epsilon_{\rm p}>0$ is a numerical stabilizer. We then decompose the projected residual into locally smoothed and high-frequency components:
\begin{equation}
\boldsymbol{r}_{t_i}^{\operatorname{low}} = \operatorname{P}_{\rm res} \left( \boldsymbol{r}_{t_i}^{\perp} \right), \qquad \boldsymbol{r}_{t_i}^{\operatorname{high}} = \boldsymbol{r}_{t_i}^{\perp} - \boldsymbol{r}_{t_i}^{\operatorname{low}},
\label{eq:appendix_frequency_decomposition}
\end{equation}
where $\operatorname{P}_{\rm res}(\cdot)$ denotes local average pooling with kernel size $k_{\rm res}\times k_{\rm res}$. The high-frequency gate is defined as
\begin{equation}
\lambda_{\rm hf}(\ell_i) = \frac{1}{1+\exp(-\ell_i)}.
\label{eq:appendix_hf_gate}
\end{equation}
The deviation modulation operator used in our experiments is therefore instantiated as
\begin{equation}
\operatorname{G} \left( \boldsymbol{r}_{t_i}, \hat{\boldsymbol{x}}_{0}^{(t_i)}, \ell_i \right) = \boldsymbol{r}_{t_i}^{\operatorname{low}} + \lambda_{\rm hf}(\ell_i) \boldsymbol{r}_{t_i}^{\operatorname{high}}.
\label{eq:appendix_guidance_operator}
\end{equation}
Finally, the refined clean-signal prediction is
\begin{equation}
\hat{\boldsymbol{x}}_{0}^{\operatorname{DiFA}}(t_i) = \hat{\boldsymbol{x}}_{0}^{(t_i)} + \omega \operatorname{G} \left( \boldsymbol{r}_{t_i}, \hat{\boldsymbol{x}}_{0}^{(t_i)}, \ell_i \right).
\label{eq:appendix_final_update}
\end{equation}

\paragraph{Default Hyperparameters.}
Unless otherwise stated, all reported DiFA results use the lightweight instantiation above with the default settings in Table~\ref{tab:difa_default_hyperparameters}. Ablation experiments modify only the factors explicitly specified in their corresponding tables.

\begin{table}[htbp]
\centering
\caption{Default hyperparameter settings for the lightweight DiFA instantiation used in the main experiments.}
\label{tab:difa_default_hyperparameters}
\begin{tabular}{lll}
\toprule
Hyperparameter & Meaning & Default value \\
\midrule
$K$ & causal temporal window size & $3$ \\
$\tau$ & structural compatibility sharpness & $4.0$ \\
$s$ & reported DiFA deviation scale & $1.7$ \\
$\omega=s-1$ & residual guidance coefficient & $0.7$ \\
$k_{\rm sim}$ & similarity pooling kernel size & $3$ \\
$k_{\rm res}$ & residual smoothing kernel size & $5$ \\
$\lambda_{\rm hf}(\ell_i)$ & high-frequency gate & $\left(1+\exp(-\ell_i)\right)^{-1}$ \\
\bottomrule
\end{tabular}
\end{table}

The reported scale $s$ follows the convention that $s=1$ recovers the uncorrected baseline; therefore, the coefficient multiplying the deviation guidance is implemented as $\omega=s-1$. The function $\mu(\ell_i)$ and the numerical stabilizers $\epsilon_{\rm a}$, $\epsilon_{\rm s}$, $\epsilon_{\rm n}$, and $\epsilon_{\rm p}$ are set according to the implementation configuration used for each experimental setting.

\section{Additional Ablation and Validation Studies}
\label{app:ablation}

This appendix provides additional ablation, sensitivity, and validation studies that complement the main experiments and further clarify the contribution of each component in DiFA. Unless otherwise specified, all ablation results are obtained under the same pretrained model and solver setting as the corresponding baseline. Lower FID is better, and higher IS is better.

\begin{figure*}[t]
    \centering 
    \begin{subfigure}[b]{0.48\textwidth}
        \centering
        \includegraphics[width=\linewidth]{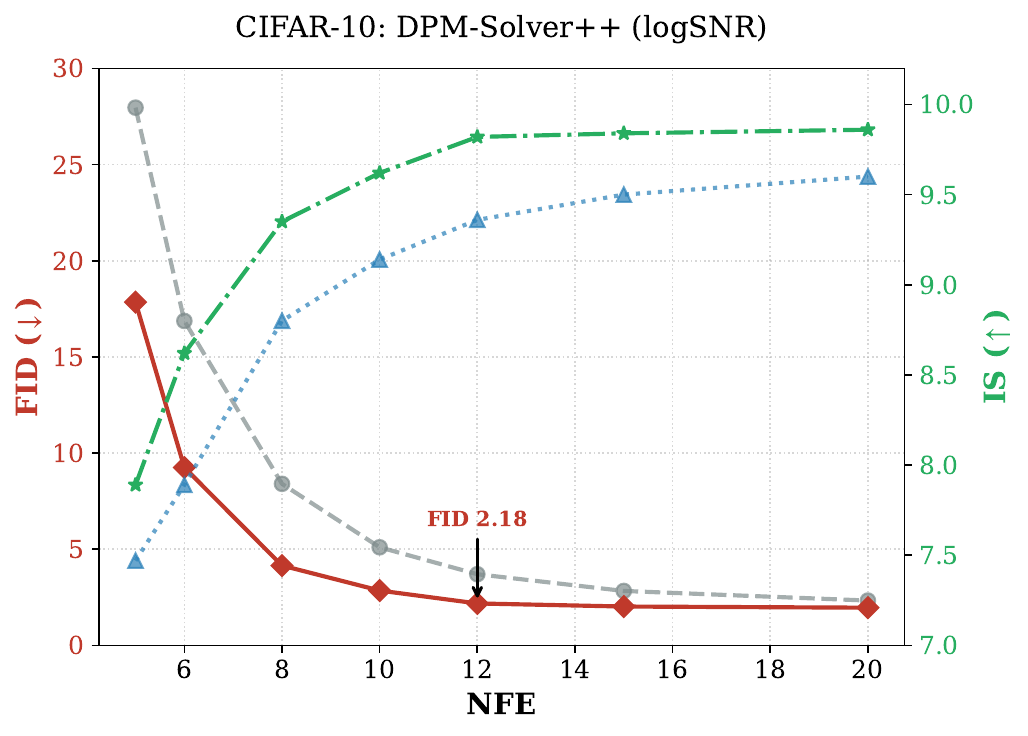}
        \caption{CIFAR-10: DPM-Solver++ (logSNR)}
        \label{fig:cifar_dpm}
    \end{subfigure}~~~~
    \hfill
    \begin{subfigure}[b]{0.48\textwidth}
        \centering
        \includegraphics[width=\linewidth]{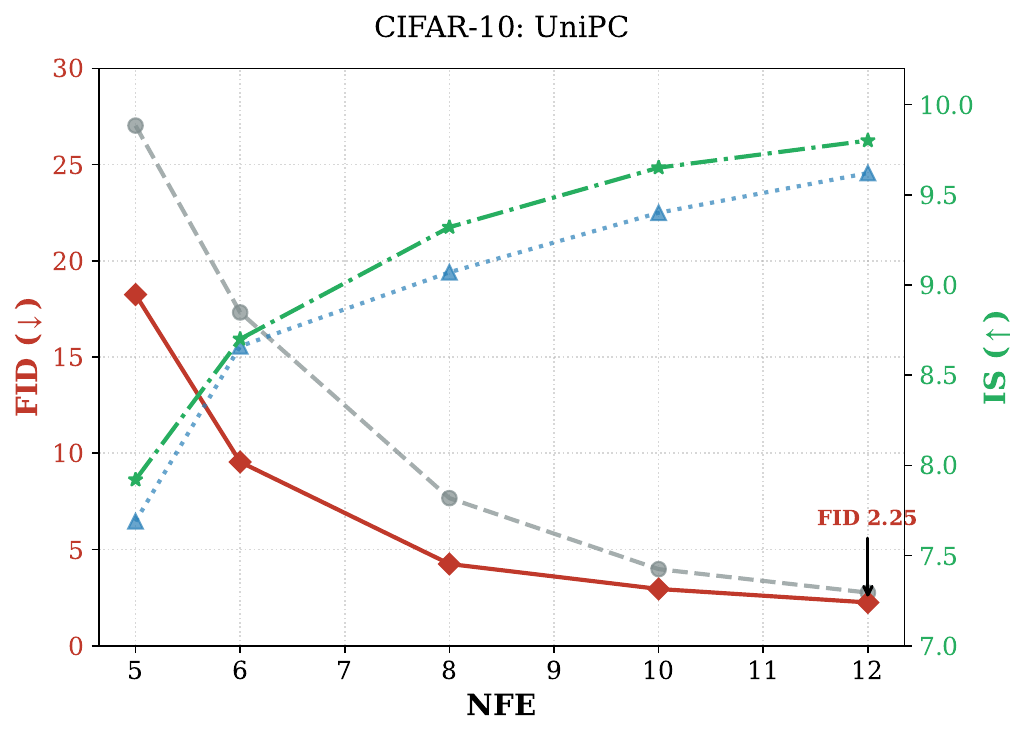}
        \caption{CIFAR-10: UniPC}
        \label{fig:cifar_unipc}
    \end{subfigure}
    
    \vspace{0.25cm}  
     
    \begin{subfigure}[b]{0.48\textwidth}
        \centering
        \includegraphics[width=\linewidth]{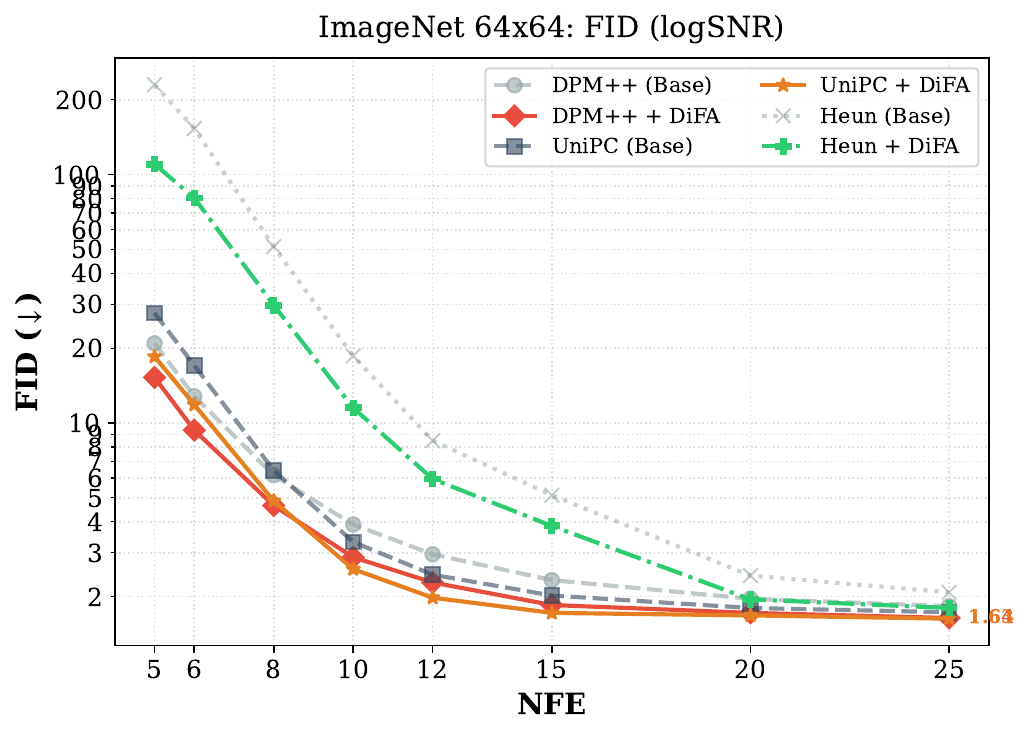}
        \caption{ImageNet: FID (logSNR)}
        \label{fig:img_log_fid}
    \end{subfigure}
    \hfill
    \begin{subfigure}[b]{0.48\textwidth}
        \centering
        \includegraphics[width=\linewidth]{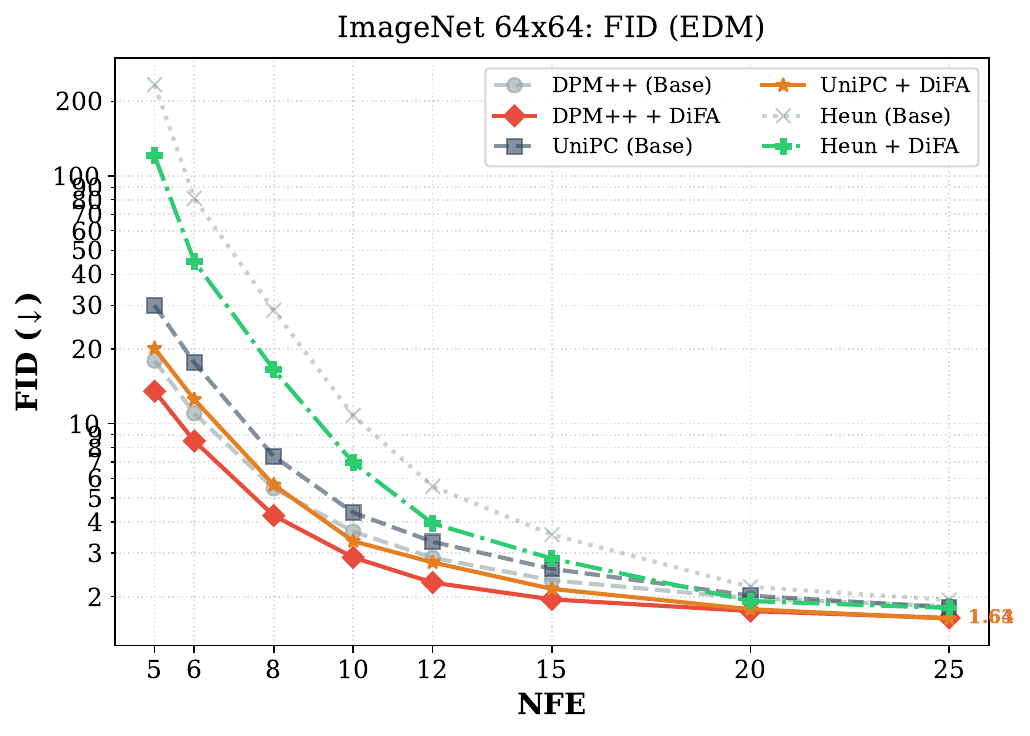}
        \caption{ImageNet: FID (EDM)}
        \label{fig:img_edm_fid}
    \end{subfigure}
    
    \caption{Quantitative Results. \textbf{Top Row:} On CIFAR-10, DiFA (solid red lines) significantly lowers FID compared to baselines (dashed gray lines) across all NFE regimes for both DPM-Solver++ and UniPC. Note the rapid convergence at NFE 12 (FID $\approx$ 2.2). \textbf{Bottom Row:} On ImageNet 64x64, DiFA consistently improves the convergence of various solvers (DPM++, UniPC, Heun) under both logSNR and EDM schedules, achieving strong training-free performance with an FID of $\approx$ 1.63 at 25 steps.}
    \label{fig:main_results}
\end{figure*}
\subsection{Ablation on Latent Diffusion Models}
\label{app:ldm_ablation}

We first evaluate DiFA on a latent diffusion model using the LSUN Bedroom dataset. For this experiment, we retain the notation used in
the corresponding implementation, where $\gamma_0$ and $\lambda$
denote the deviation strength and residual coefficient, respectively. 
The latent-diffusion experiments use a task-specific parameterization;
the corresponding symbols are defined in Table~\ref{tab:ldm_hyperparameter}. This experiment verifies that DiFA is not restricted to pixel-space diffusion models. Since DiFA operates on clean-signal predictions within the sampling loop, it can also be applied to latent-space samplers without changing the pretrained network.

\begin{table}[ht]
\centering
\caption{DiFA versus the naive ODE-solver baseline on LSUN Bedroom. The naive baseline corresponds to DPM-Solver++ with DiFA disabled.}
\label{tab:ldm_main}
\begin{tabular}{lccc}
\toprule
Method & 5 NFE & 10 NFE & 20 NFE \\
\midrule
Naive Baseline & 21.238 & 3.877 & 3.256 \\
DiFA (Full) & \textbf{5.912} & \textbf{3.579} & \textbf{2.939} \\
\midrule
FID Improvement & 72.2\% & 7.7\% & 9.7\% \\
\bottomrule
\end{tabular}
\end{table}

\begin{table}[ht]
\centering
\caption{Component ablation on LSUN Bedroom. We isolate the effects of historical consensus, SNR gating, and magnitude alignment.}
\label{tab:ldm_component_ablation}
\begin{tabular}{lcccccc}
\toprule
Configuration & $W$ & $SNR_{\rm lo}$ & $\phi$ & 5 NFE $\downarrow$ & 10 NFE $\downarrow$ & 20 NFE $\downarrow$ \\
\midrule
Naive Baseline & 1 & Full & 0.0 & 21.238 & 3.877 & 3.256 \\
w/o History & 1 & -4.5 & 0.5 & 21.238 & 3.877 & 3.256 \\
w/o SNR Gating & 3 & Full & 0.5 & \textbf{5.838} & 4.414 & 3.119 \\
w/o Energy ($\phi$) & 3 & -4.5 & 0.0 & 7.053 & 4.462 & 3.142 \\
DiFA (Full) & 3 & -4.5 & 0.5 & 5.912 & \textbf{3.579} & \textbf{2.939} \\
\bottomrule
\end{tabular}
\end{table}

The LDM ablation shows that historical consensus is essential: when the history window collapses to $W=1$, DiFA degenerates to the baseline. SNR gating is particularly useful at medium and high NFE budgets, where unguided activation may introduce unstable corrections. The magnitude alignment factor $\phi$ stabilizes latent-space energy and prevents excessive magnitude drift during deviation guidance.

\begin{table}[ht]
\centering
\caption{Hyperparameter robustness on LSUN Bedroom. A dash denotes the naive baseline without DiFA. All listed DiFA configurations improve over the naive baseline.}
\label{tab:ldm_hyperparameter}
\begin{tabular}{ccccccccc}
\toprule
$\gamma_0$ & $\lambda$ & $\phi$ & $SNR_{\rm lo}$ & $SNR_{\rm hi}$ & $\tau$ & 5 NFE $\downarrow$ & 10 NFE $\downarrow$ & 20 NFE $\downarrow$ \\
\midrule
-- & -- & -- & -- & -- & -- & 21.238 & 3.877 & 3.256 \\
1.40 & 0.5 & 0.5 & -2.5 & 3.0 & 0.8 & 9.142 & 3.684 & 3.230 \\
1.40 & 0.5 & 0.5 & -2.5 & 4.0 & 0.8 & 9.032 & 3.729 & 3.248 \\
1.50 & 0.5 & 0.5 & -4.5 & 3.0 & 0.8 & 6.535 & 3.658 & 3.104 \\
1.50 & 0.5 & 0.7 & -3.5 & 2.0 & 0.8 & 7.576 & \textbf{3.579} & 3.120 \\
1.75 & 0.4 & 0.5 & -4.5 & 5.0 & 0.8 & 6.069 & 4.465 & \textbf{2.939} \\
1.75 & 0.4 & 0.5 & -4.5 & 5.0 & 1.0 & 6.019 & 4.429 & 2.954 \\
\bottomrule
\end{tabular}
\end{table}

The robustness study shows that DiFA consistently improves over the naive baseline under different hyperparameter choices. The best configuration can vary across NFE budgets, suggesting that DiFA is not overly sensitive to a single parameter setting.

\subsection{Ablation on EDM}
\label{app:edm_ablation}

We further conduct controlled ablation experiments under the EDM setting at 10 NFE. These results isolate the contribution of the main DiFA components: historical consensus, adaptive SNR gating, and magnitude alignment.

\begin{table}[ht]
\centering
\caption{Main component ablation under the EDM setting at 10 NFE.}
\label{tab:edm_main_ablation}
\begin{tabular}{lccccc}
\toprule
Configuration & History ($W$) & SNR Gating ($SNR_{\rm lo}$) & Energy ($\phi$) & FID $\downarrow$ & IS $\uparrow$ \\
\midrule
Naive Baseline & 1 & Full & 0.0 & 5.071 & 9.139 \\
w/o History & 1 & -2.5 & 0.5 & 5.071 & 9.139 \\
w/o SNR Gating & 3 & Full & 0.5 & 3.375 & 9.574 \\
w/o Energy ($\phi$) & 3 & -2.5 & 0.0 & 2.846 & 9.661 \\
DiFA (Full) & 3 & -2.5 & 0.5 & \textbf{2.797} & \textbf{9.680} \\
\bottomrule
\end{tabular}
\end{table}

The main component ablation under the EDM setting is reported in
Table~\ref{tab:edm_main_ablation}. The full DiFA configuration improves the baseline FID from 5.071 to
2.797, as reported in Table~\ref{tab:edm_main_ablation}. The
corresponding qualitative comparison is shown in
Figure~\ref{fig:dpmdifacifar10}. The identical results of the naive baseline and the w/o History setting indicate that setting $W=1$ disables temporal consensus and collapses the update to the baseline behavior.

The following sensitivity studies examine the robustness of DiFA to
individual hyperparameters under the EDM setting. Each sweep is
designed to vary one hyperparameter while keeping the remaining
settings fixed within that sweep. Since the controlled configurations
may differ across sweeps, the results should be compared primarily
within each table rather than across tables.

\begin{table}[ht]
\centering
\caption{Sensitivity of the history window size $W$ under the EDM setting.}
\label{tab:edm_window}
\begin{tabular}{lcccc}
\toprule
Window Size ($W$) & 1 (Baseline) & 2 & 3 (Ours) & 4 \\
\midrule
FID $\downarrow$ & 5.071 & 3.219 & \textbf{2.797} & 3.791 \\
IS $\uparrow$ & 9.139 & 9.443 & \textbf{9.680} & 9.593 \\
\bottomrule
\end{tabular}
\end{table}

The performance exhibits a U-shaped trend with respect to $W$. A very small window fails to exploit temporal redundancy, while an overly large window introduces stale predictions and causes historical lag. We therefore use $W=3$ as the default setting.

\begin{table}[ht]
\centering
\caption{Sensitivity of the adaptive SNR gating threshold $SNR_{\rm lo}$ under the EDM setting.}
\label{tab:edm_snr_gate}
\begin{tabular}{lcccc}
\toprule
SNR Threshold & -6.0 (Early) & -4.0 & -2.5 & -1.0 (Late) \\
\midrule
FID $\downarrow$ & 3.137 & 3.040 & \textbf{2.986} & 3.068 \\
IS $\uparrow$ & 9.554 & \textbf{9.563} & 9.555 & 9.509 \\
\bottomrule
\end{tabular}
\end{table}

The SNR gating threshold controls when DiFA starts to apply alignment. Early activation may inject corrections during noisy and unstable stages, while overly late activation may miss useful structural formation stages. Within this sweep, moderate activation gives the best FID, and the IS values remain stable across nearby thresholds.

\begin{table}[ht]
\centering
\caption{Sensitivity of the magnitude alignment factor $\phi$ under the EDM setting.}
\label{tab:edm_phi}
\begin{tabular}{lccccc}
\toprule
Rescale Factor ($\phi$) & 0.0 & 0.3 & 0.5 (Ours) & 0.8 & 1.0 \\
\midrule
FID $\downarrow$ & 2.846 & 3.077 & \textbf{2.797} & 3.074 & 3.085 \\
IS $\uparrow$ & 9.661 & 9.555 & \textbf{9.680} & 9.568 & 9.565 \\
\bottomrule
\end{tabular}
\end{table}

The factor $\phi$ stabilizes the magnitude of the deviation-guided prediction. A moderate value preserves residual details while avoiding excessive latent-energy drift.

\begin{table}[ht]
\centering
\caption{Sensitivity of the deviation boosting scale $\gamma_0$ under the EDM setting.}
\label{tab:edm_gamma}
\begin{tabular}{lccccc}
\toprule
Scale ($\gamma_0$) & 1.1 & 1.2 & 1.3 & 1.4 & 1.5 \\
\midrule
FID $\downarrow$ & 4.119 & 3.470 & 3.121 & \textbf{3.071} & 3.285 \\
IS $\uparrow$ & 9.282 & 9.383 & 9.494 & 9.565 & \textbf{9.602} \\
\bottomrule
\end{tabular}
\end{table}

The deviation scale $\gamma_0$ controls the strength of detail re-injection. Too small a value under-corrects the consensus-smoothed prediction, whereas too large a value may amplify unstable residuals. In this sweep, $\gamma_0=1.4$ gives the lowest FID, while $\gamma_0=1.5$ gives the highest IS.

\begin{table}[ht]
\centering
\caption{ Sensitivity of the  compatibility coefficient  $\mu$  under the EDM setting.}
\label{tab:edm_mu}
\begin{tabular}{lccccc}
\toprule
LogSNR Compatibility Coefficient($\mu$) & 0.1 & 0.2 & 0.3 & 0.4 & 0.6 \\
\midrule
FID $\downarrow$ & 3.212 & \textbf{3.205} & 3.220 & 3.256 & 3.373 \\
IS $\uparrow$ & \textbf{9.499} & 9.492 & 9.478 & 9.457 & 9.425 \\
\bottomrule
\end{tabular}
\end{table}

The logSNR compatibility coefficient $\mu$ controls how strongly DiFA penalizes temporally distant or noise-incompatible historical predictions. A small compatibility coefficient is generally stable. In this sweep, $\mu=0.2$ gives the lowest FID, while $\mu=0.1$ gives the highest IS, suggesting that values in the range of $0.1$--$0.3$ provide a reliable balance between using historical evidence and avoiding stale predictions.

\subsection{Cross-Paradigm Validation on Flow-Matching Models}
\label{app:fm_validation}

Although DiFA is introduced in the context of diffusion sampling, its core operation is not tied to a specific diffusion solver. DiFA refines the clean-signal prediction supplied to a downstream update rule. Therefore, it can also be applied to flow-matching models when the predicted velocity or transport state can be converted into an endpoint or clean-signal estimate. This setting provides a useful test of whether the proposed prediction-alignment principle generalizes beyond conventional diffusion samplers.

We evaluate this transferability in two complementary settings. The first is an unguided validation on SiT-XL/2 with ImageNet (256$\times$256), where classifier-free guidance (CFG) is disabled. This experiment was conducted under a lightweight computational setting on 4090 GPUs and was designed as a sanity check for cross-paradigm transfer. As shown in Table~\ref{tab:fm_validation}, DiFA improves the ODE-solver baseline at 5, 10, and 50 NFE without classifier-free guidance. These results indicate that the temporal redundancy exploited by DiFA is also present in flow-matching trajectories.

\begin{table*}[ht]
\centering
\caption{Preliminary no-CFG validation on SiT-XL/2 with ImageNet $256\times256$. This lightweight ODE-solver experiment was conducted on 4090 GPUs to test whether DiFA transfers to flow-matching trajectories without classifier-free guidance.} 
\label{tab:fm_validation}
\begin{tabular}{clccccc}
\toprule
NFE & Method & IS $\uparrow$ & FID $\downarrow$ & sFID $\downarrow$ & Precision $\uparrow$ & Recall $\uparrow$ \\
\midrule
5 & ODE-Solver & 22.5016 & 94.5311 & 60.2238 & 0.2089 & 0.5016 \\
5 & DiFA (Ours) & \textbf{31.8312} & \textbf{72.3969} & \textbf{40.0246} & \textbf{0.2905} & \textbf{0.5694} \\
\midrule
10 & ODE-Solver & 80.2694 & 27.8142 & 13.0268 & 0.5542 & 0.6091 \\
10 & DiFA (Ours) & \textbf{91.7977} & \textbf{21.5990} & \textbf{10.2921} & \textbf{0.5982} & \textbf{0.6256} \\
\midrule
50 & ODE-Solver & 118.6470 & 11.1215 & 7.0554 & 0.6662 & 0.6682 \\
50 & DiFA (Ours) & \textbf{120.2284} & \textbf{10.7515} & \textbf{6.8273} & \textbf{0.6676} & \textbf{0.6694} \\
\bottomrule
\end{tabular}
\end{table*}

The second setting provides a more systematic validation after further adapting DiFA to the flow-matching formulation. In this FM-adapted version, DiFA is applied to the endpoint prediction recovered from the flow-matching transport trajectory, and the temporal consensus is computed in a clean-signal space rather than directly on the velocity field. We also use the flow-matching time variable to define a logSNR-like reliability coordinate, so that the sliding-window consensus and deviation guidance remain compatible with the FM trajectory. This adaptation makes DiFA a clean-prediction refinement module for flow-matching inference rather than a diffusion-specific correction.

Using this FM-adapted implementation, we conduct a CFG-enabled systematic evaluation on H200 GPUs with CFG (=1.5). Table~\ref{tab:difa_sitxl2_imagenet256} expands the preliminary validation into a full solver--NFE--scale matrix under both Euler and Heun2 solvers. This experiment is not intended as a component ablation. Instead, it evaluates cross-paradigm robustness: whether DiFA remains effective under stronger guided generation settings and whether its behavior is stable across solver choices, NFE budgets, and guidance strengths.

\begin{table}[ht]  
\centering
\caption{CFG-enabled systematic validation of the FM-adapted DiFA on SiT-XL/2 with ImageNet $256\times256$. This experiment is conducted on
NVIDIA H200 GPUs with CFG fixed at 1.5. 
We evaluate Euler and Heun2
solvers across multiple NFE budgets and DiFA refinement scales. 
Best
and second-best results among the evaluated DiFA configurations are
shown in bold and underlined, respectively.  
} 
\label{tab:difa_sitxl2_imagenet256}
\begin{tabular}{llccccccc}
\toprule
\textbf{NFE} & \textbf{Method (Scale)} & \textbf{IS $\uparrow$} & \textbf{($\Delta$ IS)} & \textbf{FID $\downarrow$} & \textbf{($\Delta$ FID)} & \textbf{sFID $\downarrow$} & \textbf{Prec. $\uparrow$} & \textbf{Rec. $\uparrow$} \\
\midrule
\multirow{5}{*}{5} & Baseline (Euler) & 58.02 & - & 52.64 & - & 44.10 & 0.3692 & 0.4353 \\
& DiFA (1.25) & 73.90 & (+27.4\%) & 40.81 & (-22.5\%) & 33.33 & 0.4407 & 0.4290 \\
& DiFA (1.5) & 86.49 & (+49.1\%) & 32.86 & (-37.6\%) & 26.11 & 0.4969 & 0.4322 \\
& DiFA (1.7) & \underline{94.15} & \underline{(+62.3\%)} & \underline{28.52} & \underline{(-45.8\%)} & \underline{22.23} & \underline{0.5318} & \underline{0.4383} \\
& DiFA (1.75) & \textbf{95.88} & \textbf{(+65.2\%)} & \textbf{27.61} & \textbf{(-47.5\%)} & \textbf{21.46} & \textbf{0.5400} & \textbf{0.4406} \\
\cmidrule{1-9}
\multirow{5}{*}{10} & Baseline (Euler) & 184.11 & - & 8.35 & - & 8.83 & 0.7303 & 0.5234 \\
& DiFA (1.25) & 200.94 & (+9.1\%) & 6.08 & (-27.2\%) & 6.40 & 0.7616 & 0.5311 \\
& DiFA (1.5) & 213.98 & (+16.2\%) & 4.80 & (-42.5\%) & \underline{5.44} & 0.7790 & 0.5342 \\
& DiFA (1.7) & \underline{219.79} & \underline{(+19.4\%)} & \underline{4.28} & \underline{(-48.7\%)} & \textbf{5.40} & \underline{0.7884} & \underline{0.5361} \\
& DiFA (1.75) & \textbf{220.91} & \textbf{(+20.0\%)} & \textbf{4.20} & \textbf{(-49.7\%)} & 5.47 & \textbf{0.7912} & \textbf{0.5378} \\
\cmidrule{1-9}
\multirow{5}{*}{20} & Baseline (Euler) & 231.60 & - & 3.33 & - & 5.33 & 0.7902 & 0.5736 \\
& DiFA (1.25) & 241.02 & (+4.1\%) & 2.76 & (-17.2\%) & 4.45 & 0.7997 & 0.5770 \\
& DiFA (1.5) & 248.54 & (+7.3\%) & 2.48 & (-25.5\%) & \textbf{4.24} & 0.8060 & \underline{0.5784} \\
& DiFA (1.7) & \underline{251.81} & \underline{(+8.7\%)} & \underline{2.42} & \underline{(-27.5\%)} & \underline{4.42} & \underline{0.8105} & 0.5778 \\
& DiFA (1.75) & \textbf{252.71} & \textbf{(+9.1\%)} & \textbf{2.41} & \textbf{(-27.5\%)} & 4.51 & \textbf{0.8121} & \textbf{0.5804} \\
\midrule
 \multirow{5}{*}{5} & Baseline (Heun2) & 148.39 & - & 14.64 & - & 22.11 & 0.6590 & 0.5155 \\
& DiFA (1.25) & 162.01 & (+9.2\%) & 11.67 & (-20.3\%) & 17.88 & 0.6827 & 0.5294 \\
& DiFA (1.5) & 174.30 & (+17.5\%) & 9.53 & (-34.9\%) & 14.78 & 0.7049 & 0.5388 \\
& DiFA (1.7) & \underline{181.66} & \underline{(+22.4\%)} & \underline{8.26} & \underline{(-43.6\%)} & \underline{12.92} & \underline{0.7158} & \underline{0.5538} \\
& DiFA (1.75) & \textbf{183.22} & \textbf{(+23.5\%)} & \textbf{7.99} & \textbf{(-45.4\%)} & \textbf{12.53} & \textbf{0.7178} & \textbf{0.5560} \\
\cmidrule{1-9}
 \multirow{5}{*}{10} & Baseline (Heun2) & 233.24 & - & 3.15 & - & 6.59 & 0.7882 & 0.5852 \\
& DiFA (1.25) & 241.89 & (+3.7\%) & 2.66 & (-15.7\%) & 5.65 & 0.7959 & 0.5885 \\
& DiFA (1.5) & 247.92 & (+6.3\%) & 2.37 & (-24.9\%) & 5.09 & 0.7989 & 0.5894 \\
& DiFA (1.7) & \underline{253.58} & \underline{(+8.7\%)} & \underline{2.22} & \underline{(-29.6\%)} & \underline{4.86} & \underline{0.8034} & \textbf{0.5934} \\
& DiFA (1.75) & \textbf{254.45} & \textbf{(+9.1\%)} & \textbf{2.19} & \textbf{(-30.3\%)} & \textbf{4.82} & \textbf{0.8046} & \underline{0.5925} \\
\cmidrule{1-9}
 \multirow{5}{*}{20} & Baseline (Heun2) & 250.65 & - & 2.20 & - & 4.73 & 0.8009 & 0.6000 \\
& DiFA (1.25) & 254.11 & (+1.4\%) & 2.08 & (-5.6\%) & 4.52 & 0.8039 & \textbf{0.5987} \\
& DiFA (1.5) & 259.56 & (+3.6\%) & 2.02 & (-8.4\%) & \textbf{4.47} & 0.8076 & \underline{0.5986} \\
& DiFA (1.7) & \underline{261.63} & \underline{(+4.4\%)} & \textbf{2.01} & \textbf{(-8.6\%)} & \underline{4.51} & \underline{0.8103} & 0.5964 \\
& DiFA (1.75) & \textbf{262.39} & \textbf{(+4.7\%)} & \underline{2.01} & \underline{(-8.6\%)} & 4.53 & \textbf{0.8117} & 0.5967 \\
\bottomrule
\end{tabular}
\end{table}

The two settings should therefore be interpreted as complementary evidence rather than directly comparable benchmarks. The preliminary no-CFG experiment verifies the basic transferability of DiFA to flow-matching models. The CFG-enabled H200 experiment, obtained after the FM-specific adaptation, provides a stronger and more realistic validation under guided sampling and multiple solver configurations.

The results in Table~\ref{tab:difa_sitxl2_imagenet256} show that DiFA
consistently improves FID, IS, sFID, and precision over the
corresponding flow-matching baselines. Recall remains comparable in
most settings, although a small decrease is observed for Heun2 at
20 NFE, indicating a mild precision--recall trade-off in this regime.  
The gains are especially pronounced in the low-NFE regime, where trajectory discretization error and clean-signal prediction uncertainty are more severe. For example, in the CFG-enabled setting, DiFA reduces the Euler 5-NFE FID from (52.64) to (27.61), and reduces the Heun2 5-NFE FID from (14.64) to (7.99). At 10 NFE, DiFA still provides substantial improvements for both Euler and Heun2. At 20 NFE, where the baseline trajectories are already stronger, the relative margin naturally becomes smaller but remains generally positive.

The DiFA refinement-scale sweep further shows that the improvement is not caused by a single isolated hyperparameter choice. Scales around (1.7)--(1.75) usually provide the strongest FID and IS performance, while smaller scales still improve over the corresponding baselines. These results support the view that DiFA acts as a solver-compatible inference-time refinement mechanism and that its temporal prediction-alignment principle can extend from diffusion samplers to flow-matching trajectories.  \nocite{CHEN2026112442, wu2023fast, xu2024sparse,li2024coupled,chen2024rethinking,chen2025dequantified,li2026loramixer}

The quantitative improvements are also reflected in the qualitative
comparison shown in Figure~\ref{fig:guidedimagenet5nfe}. Under the
5-NFE setting, DiFA produces samples with better structural
coherence and fewer truncation artifacts than the baseline solver.
The comparison provides qualitative evidence that the numerical gains
are accompanied by improved visual quality.

\begin{figure}[t]
    \centering
    \begin{subfigure}[b]{0.49\linewidth}
        \centering
        \centerline{  SiT with Baseline Solver }
        \vspace{4pt} 
        \includegraphics[width=\linewidth]{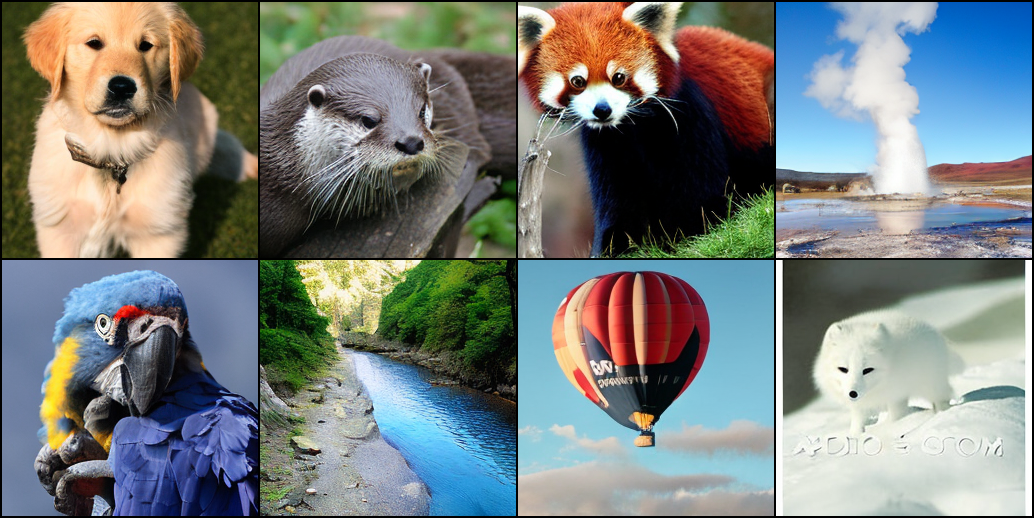}
    \end{subfigure}
    \hfill 
    \begin{subfigure}[b]{0.49\linewidth}
        \centering
        \centerline{ SiT with DiFA Refined Prediction}
        \vspace{4pt}
        \includegraphics[width=\linewidth]{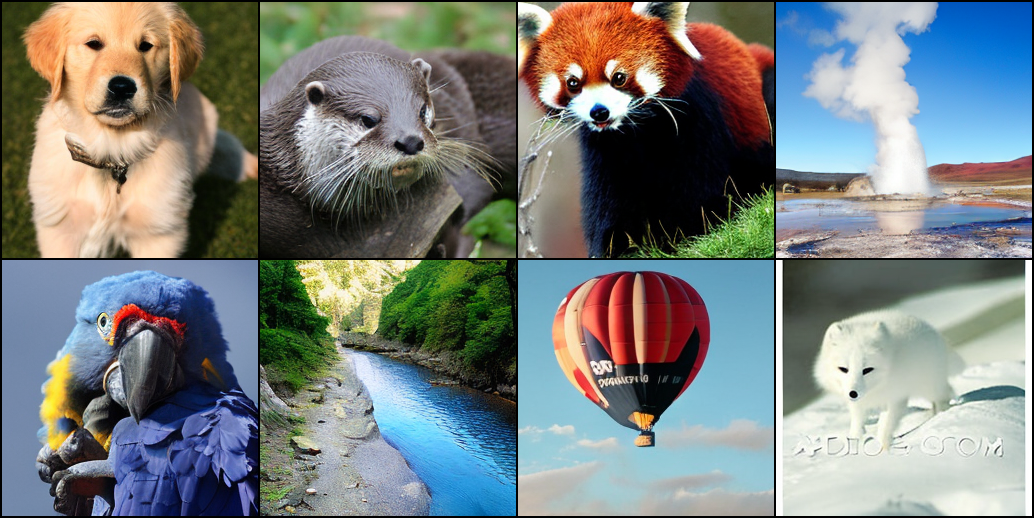}
    \end{subfigure}  
    \caption{Qualitative comparison on ImageNet
($256\times256$) using the pretrained SiT-XL/2 model with the Euler
solver, 5 function evaluations ($\text{NFE} = 5$), and a classifier-free guidance (CFG)
scale of 4.0. Compared with the baseline, DiFA
preserves better structural coherence and local details under this
aggressive few-step setting.}
    \label{fig:guidedimagenet5nfe}
    \vspace{-0.3cm}
\end{figure}

\end{document}